\definecolor{almond}{rgb}{0.99, 0.96, 0.89} 
\newcolumntype{L}[1]{>{\raggedright\let\newline\\\arraybackslash\hspace{0pt}}m{#1}}
\newcolumntype{C}[1]{>{\centering\let\newline\\\arraybackslash\hspace{0pt}}m{#1}}
\newcolumntype{R}[1]{>{\raggedleft\let\newline\\\arraybackslash\hspace{0pt}}m{#1}}
\newtheorem{theorem}{Theorem}
\newtheorem{lemma}{Lemma}
\newtheorem{corollary}{Corollary}[theorem]
\newtheorem{definition}{Definition}[section]
\newcommand{\comment}[1]{}
\newcommand{\aref}[2]{#2}
\title{A Theoretical Analysis of the Repetition Problem in Text Generation\thanks{The work described in this paper is substantially supported by a grant from the Research Grant Council of the Hong Kong Special Administrative Region, China (Project Code: 14204418).}}
\author {
    Zihao Fu,\textsuperscript{\rm 1} 
    Wai Lam,\textsuperscript{\rm 1}
    Anthony Man-Cho So,\textsuperscript{\rm 1}
    Bei Shi\textsuperscript{\rm 2} \\
}
\begin{document}
\maketitle

\begin{abstract}
    Text generation tasks, including translation, summarization, language models, and etc. see rapid growth during recent years. Despite the remarkable achievements, the repetition problem has been observed in nearly all text generation models undermining the generation performance extensively. To solve the repetition problem, many methods have been proposed, but there is no existing theoretical analysis to show why this problem happens and how it is resolved. In this paper, we propose a new framework for theoretical analysis for the repetition problem. We first define the Average Repetition Probability (ARP) to characterize the repetition problem quantitatively. Then, we conduct an extensive analysis of the Markov generation model and derive several upper bounds of the average repetition probability with intuitive understanding. We show that most of the existing methods are essentially minimizing the upper bounds explicitly or implicitly. Grounded on our theory, we show that the repetition problem is, unfortunately, caused by the traits of our language itself. One major reason is attributed to the fact that there exist too many words predicting the same word as the subsequent word with high probability. Consequently, it is easy to go back to that word and form repetitions and we dub it as the high inflow problem. Furthermore, we extend our analysis to broader generation models by deriving a concentration bound of the average repetition probability for a general generation model. Finally,  based on the theoretical upper bounds, we propose a novel rebalanced encoding approach to alleviate the high inflow problem and thus reducing the upper bound.
    The experimental results show that our theoretical framework is applicable in general generation models and our proposed rebalanced encoding approach alleviates the repetition problem significantly in both the translation task and the language modeling task. The source code of this paper can be obtained from \url{https://github.com/fuzihaofzh/repetition-problem-nlg}.
    \end{abstract}
    
    \section{Introduction}

    Text generation tasks aim at generating human-readable text for specific tasks including machine translation~\cite{sutskever2014sequence,bahdanau2014neural,luong2015effective}, summarization~\cite{nallapati2016abstractive}, data-to-text generation~\cite{lebret2016neural,wiseman2017challenges,fu2020open,fu2020partially,fu2020dynamic}, language modeling~\cite{radfordimproving,radfordlanguage,brown2020language} and etc. Despite the remarkable growth, nearly all existing generation systems suffer from the repetition problem~\cite{fan2018hierarchical,holtzman2020curious,welleck2020neural} which unavoidably undermines the overall performance. The repetition problem refers to an undesirable effect that the results of the generation system always contain duplicate fragments. For example, a generation system is likely to generate text like ``Tough it is still unfinished, but I like it but I like it but I like ...'', which contains the useless repeating text. This problem hurts the quality of generation systems severely.
    
    There are many conjectures for the reason of the repetition problem, but the real cause is still unknown \cite{welleck2020neural}. Some conjectures think that it is caused by the model architectures \cite{holtzman2020curious,vig2018deconstructing}. Some of them suggest that it may be caused by the gap between sampling methods and the real human language \cite{holtzman2020curious}. \citet{choi2018missing} argues that reliance on the ﬁxed corpora cannot fulfill the real goal of using the language. Moreover, \citet{welleck2020neural} attribute the reason to the likelihood maximizing decoding. However, there is no existing theory to quantitatively explain how and why the repetition occurs. Based on these conjectures, many out-of-the-box methods have been proposed to alleviate the repetition problem. \citet{ackley1985learning,ficler2017controlling,caccia2019language} propose to utilize the Temperature sampling in the decoding phase. \citet{fan2018hierarchical} propose the Topk sampling method while \citet{holtzman2020curious} propose the Nucleus sampling method. Though these methods are shown to successfully alleviate the repetition problem, there still needs a convincing explanation.
    
    To determine the root cause of the repetition problem and give a convincing understanding of why existing methods work, we propose a novel theoretical analysis framework. We first define the general generation model and the Markov generation model as the core text generation models for our analysis. Then, we define the Average Repetition Probability (ARP) to characterize the repetition problem quantitatively on the Markov generation model. Subsequently, we conduct an extensive analysis of the Markov generation model and derive several upper bounds of the ARP with intuitive understanding. We show that most of the existing methods are essentially minimizing the theoretical upper bounds explicitly or implicitly. Guided by our theory, we show that the repetition problem is caused by, unfortunately, our language itself. One major reason is attributed to the fact that there exist too many words predicting the same word as the subsequent word with high probability. Consequently, it is easy to go back to that word and form repetitions and we dub it as the high inflow problem. We call the predicted word as high inflow word and call the previous word together with the high inflow word as high inflow pair. Furthermore, we extend our results to broader generation models by deriving a concentration bound of the average repetition probability for a general generation model. 
    
    Based on the theoretical upper bounds, we propose a novel rebalanced encoding approach to alleviate the high inflow problem and thus reducing the upper bound. We merge each high inflow pair as a new word and thus decrease the high inflow probability. We conduct experiments on two fundamental tasks namely the Neural Machine Translation (NMT) task and the Language Modeling (LM) task. The experimental results show that our proposed new method alleviates the repetition significantly and outperforms existing prevalent out-of-the-box methods. We also conduct several experiments to verify the correctness and applicability of our theory when applied in NMT and LM tasks.

    \section{Theoretical Framework}
    Our theoretical framework mainly encompasses three parts. We first define the Average Repetition Probability (ARP) to characterize the repetition problem quantitatively. Then, we analyze the repetition problem in a Markov generation model. We derive several upper bounds for ARP with intuitive understanding and we show that most of the existing approaches are trying to minimize the upper bounds explicitly or implicitly. Finally, we show that our analysis can be adopted into general generation models by proving that if each step's transition matrix does not deviate a lot from the Markov generation model, the deviation of the ARP value will be controlled with a concentration upper bound.

    \subsection{Definitions}
    \label{sec:def}
    To facilitate a quantitative analysis of the repetition problem, we first give some definitions which will be used in the later sections of this paper.
    We denote the generated sequence $s=[w_1,w_2,\cdots,w_{|s|}]$, in which $w_i$ is the $i$th word and $|s|$ is the length of $s$. We denote $s_{p:q}=[w_p,w_{p+1},\cdots,w_{q}],(1\leq p<q\leq |s|)$ as a continuous subsequence of $s$. We say a continuous subsequence $s_{r:t}$ is a \textbf{loop} if $w_r=w_t$ (it is allowed to contain sub-loops inside). We say a continuous subsequence $s_{p:q}=[s_p,s_{p+1},\cdots,s_{q}]$ is a \textbf{repetition subsequence} if $\exists 1\leq p<q\leq |s|-q+p,\ s.t. \  w_{i+q-p}=w_i, \forall i\in [p,q]$. It can be easily shown that a sequence has repetition subsequence if and only if it has at least two adjacent identical loops.
    As shown in Fig. \ref{fig:task} (a), the sequence contains repetition subsequences with adjacent loops while in Fig. \ref{fig:task} (b), it contains a loop without any repetition subsequence. It should be noted that in Fig. \ref{fig:task} (c), even though it has two identical loops, it does not contain any repetition subsequence since the loops are not adjacent.
    
    \begin{figure}[t]
        \centering
        \includegraphics[width=0.65\columnwidth]{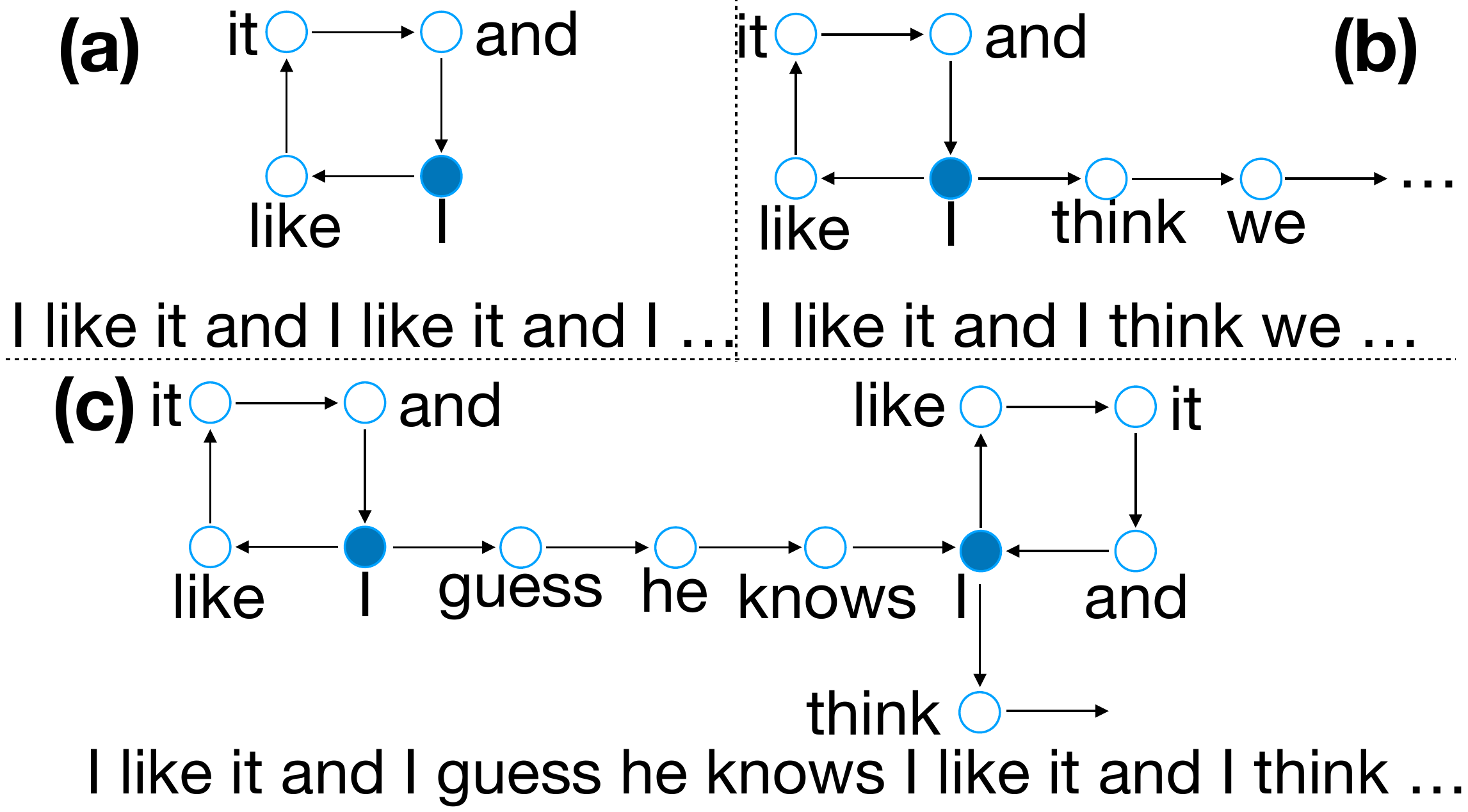}
        \caption{Loop and repetition subsequence.}
        \label{fig:task}
      \end{figure}
    
    \begin{definition}
      [General Generation Model] A general generation model is defined as $p_\theta(w_i|w_{i-1},\cdots,w_1,x)$, in which $\theta$ stands for all the model parameters; $w_i$ is the current word to be predicted; $w_{i-1},\cdots,w_1$ are the previous words that have been generated and $x$ is the input information.
    \end{definition}
    The general generation model is an abstract description of nearly all text generation models. For language models~\cite{radfordimproving,radfordlanguage,brown2020language}, the input $x$ is empty while $w_i$ is the word generated at the $i$th step. For translation models~\cite{sutskever2014sequence,bahdanau2014neural,luong2015effective}, $x$ stands for the source language. For summarization models, $x$ is the original text while $w_i$ is the $i$th word in the corresponding summary. However, analyzing the general generation model is almost impossible due to its complicated structure with many high non-linear layers. To facilitate effective analysis, we propose to first analyze a simplified model and study the relationship between them.

    \begin{definition}
      [Markov Generation Model] A Markov generation model predicts a word only based on the previous word, which can be denoted as $p_\xi(w_i|w_{i-1}) \approx p_\theta(w_i|w_{i-1},\cdots,w_1,x)$, where $\xi$ stands for parameters of the Markov generation model.
    \end{definition}
    The Markov generation model simplifies the general generation model with the intuition that the current word is often most influenced by the previous word. We conduct our analysis on the Markov generation model first and then extend to the general generation model. Since the probability of the current word only depends on the previous word, it becomes a Markov chain. We denote the transition matrix as $A\in \mathbb{R}^{(n+1)\times (n+1)}$, in which $n$ is the vocabulary size. $A$ is a stochastic matrix and it is guaranteed that $A_{ij}\ge 0,\sum_{j=1}^{n+1}A_{ij}=1,\forall i\in[1,n+1]$. We denote the $(n+1)$th word as the End-Of-Sentence (EOS) tag and the generation stops if it generates the EOS tag. Therefore, the EOS state is an absorbing state and $A$ can be written as $A=\begin{bmatrix}B,b\\0,1 \end{bmatrix}$, in which $B\in \mathbb{R}^{n\times n},b\in \mathbb{R}^{n\times 1}$. $B$ is a sub-transition matrix without the EOS tag and $B_{ij}\ge 0,\sum_{j=1}^n B_{ij}\le 1,\forall i,j\in [1,n]$.

    In order to derive a quantitative definition of the repetition problem, we first study loops in the Markov generation model. It is easy to observe that the matrix $B$ is very sparse since most words only have a limited number of subsequent words. We define the sparsity of matrix $B$ as $\zeta=\frac{1}{n^2}\sum_i^n\sum_j^n \mathds{1}(B_{ij}>0)$, in which $\mathds{1}(C)$ is the indicator function. It equals to 1 if $C$ is true and equals to 0 otherwise. Therefore, one word can transit to $\zeta n$ words on average at a single step. As illustrated in Fig. \ref{fig:h}, for a word to transit back to itself at the $k$th step, it generates $(\zeta n)^k$ paths and has $\frac{(\zeta n)^k}{n}$ paths that transit back to itself on average.
    On the other hand, the probability for the $i$th word loops back to itself after $k$ steps is the $i$th diagonal value of $B^k$ which is denoted as $B^k_{ii}$. The average probability for each path is $\frac{nB^k_{ii}}{(\zeta n)^{k}}$ and the average probability for a loop to repeat again is $(\frac{nB^k_{ii}}{(\zeta n)^k})^2$. The average probability for all loops to repeat again is $(\frac{nB^k_{ii}}{(\zeta n)^k})^2 \cdot \frac{(\zeta n)^k}{n}$. Since $(B^k_{ii})^2 \le (B^{2k})_{ii}$, to simplify the analysis, we consider the upper bound for all states and the probability sum as $\operatorname{tr}(\frac{nB^{2k}}{(\zeta n)^k})$, in which $\operatorname{tr}$ stands for the matrix trace. The average probability sum for all steps is $\sum_{k=1}^\infty \operatorname{tr}(\frac{B^{2k}}{\zeta^k n^{k-1}})$. The average probability for each word is $R=\frac{1}{n}\sum_{k=1}^\infty \operatorname{tr}(\frac{B^{2k}}{\zeta^k n^{k-1}})=\sum_{k=1}^\infty \operatorname{tr}(\frac{B^{2k}}{\zeta^k n^{k}})$. Therefore, we define the average repetition probability as follows:
    
    \begin{figure}[t]
        \centering
        \includegraphics[width=0.5\columnwidth]{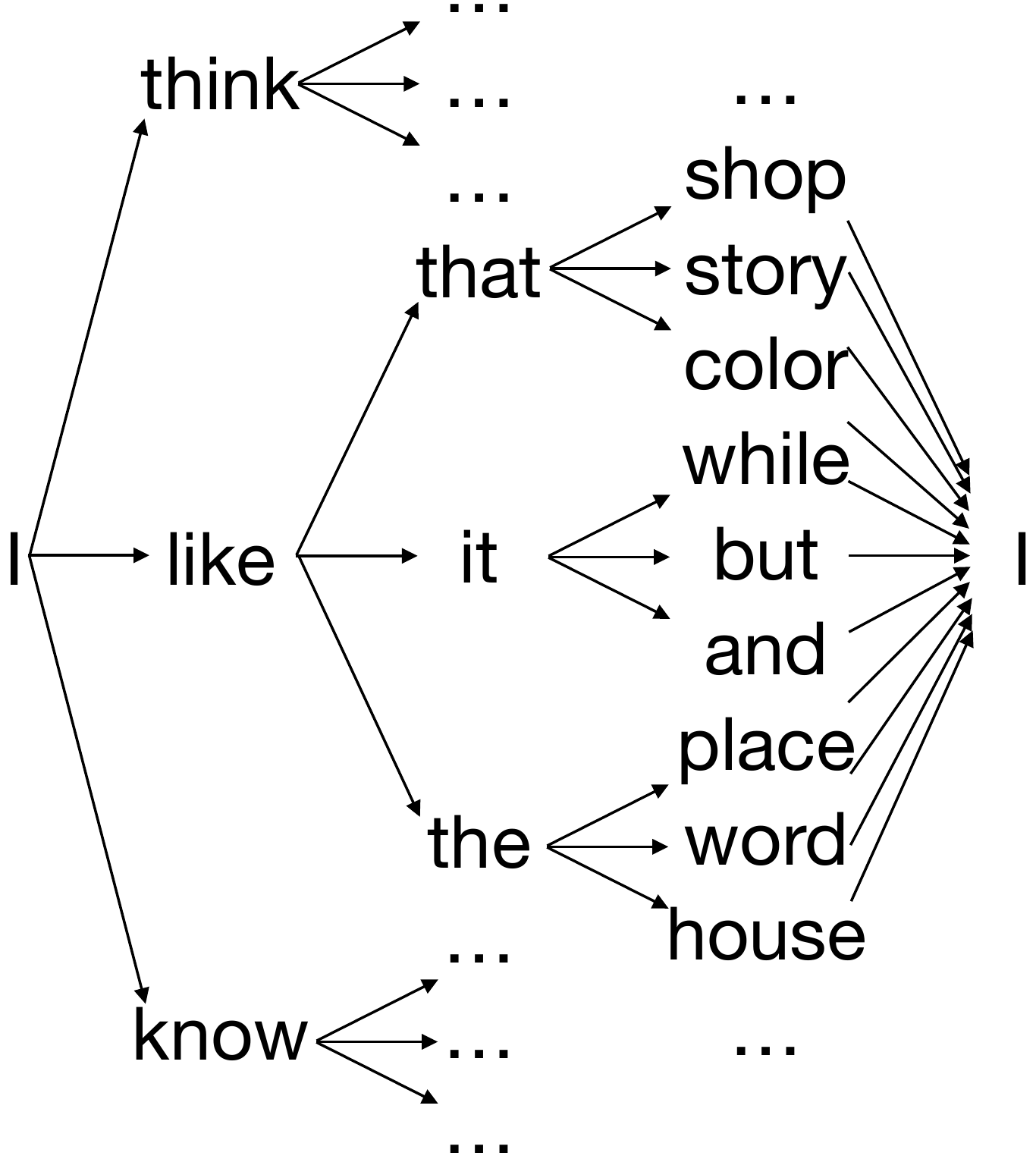}
        \caption{Illustration of loops starting and ending with the same word ``I''. On average, each word transits to three subsequent words.}
        \label{fig:h}
      \end{figure}
    
    \begin{definition}
      [Average Repetition Probability] Given a Markov generation model with sub-transition matrix $B\in \mathbb{R}^{n\times n}$ and non-zero probability $\zeta$  as defined above, the Average Repetition Probability (ARP) is defined as: 
      $$R=\sum_{k=1}^\infty \operatorname{tr}(\frac{B^{2k}}{\zeta^k n^{k}})$$
    \end{definition}
    The Average Repetition Probability (ARP) gives a quantitative description of the repetition problem. Intuitively, if ARP is high, the repetition problem occurs frequently and if ARP is small, the repetition problem seldom happens. 
    
    To better understand how existing methods work, we show that most of the existing out-of-the-box methods are equivalent to applying a transformation $\mathcal{T}$ to the word probability distribution $p$ and then applying a stochastic sampling on the transformed probability for obtaining the word $w\sim \mathcal{T}(p)$. For the \textbf{Stochastic} sampling \cite{fan2018hierarchical,holtzman2020curious}, $\mathcal{T}_s(p)=p$ is the identity operator. The transformation for the \textbf{Greedy} sampling \cite{opennmt,ott2019fairseq} changes the probability of the most probable word to 1 and sets others' probabilities to 0 which can be denoted as $\mathcal{T}_g(p)_i=\mathds{1}(i=\arg\max_j p_j)$. The \textbf{Topk} sampling \cite{fan2018hierarchical} method sets all word probabilities to 0 except top $k$ words and rescales it to a new distribution. It can be denoted as $\mathcal{T}_k(p)_i=\mathds{1}(i\in K) p_i / \sum_{j\in K} p_j$, in which
    $K$ is the Topk probability word set: $K=\arg\max_K \sum_{j\in K} p_j,s.t. |K|=k $. The \textbf{Nucleus} sampling \cite{holtzman2020curious} sets all word probabilities to 0 except for words with probabilities sum to larger than $p$. It can be denoted as $\mathcal{T}_n(p)_i=\mathds{1}(i\in N) p_i / \sum_{j\in N} p_j$, in which
    $N$ is the smallest word set with probability sum larger than $p$: $N=\arg \min_N |N|,s.t. \sum_{j\in N}p_j \ge p $. The Temperature sampling  (\textbf{Temp}) \cite{ackley1985learning,ficler2017controlling,caccia2019language} rescales the probability with a temperature parameter $t$ and can be denoted as $\mathcal{T}_t(p)_i= \exp((\log p_i)/t) / \sum_{j} \exp((\log p_j)/t)$. The Length Penalty (\textbf{LP}) \cite{wu2016google,opennmt} method prefers shorter sentences than long sentences. Without loss of generality, we simply enlarge the probability of the EOS tag with a constant $\beta$ which can be denoted as $\mathcal{T}_l(p)_i=\exp(\tilde{\ell}_i)/\sum_{j}\exp(\tilde{\ell}_j)$, in which $\tilde{\ell}_i=\ell_i + \mathds{1}(i=n+1)\beta$ and $\ell$ is the logits vector calculated by the model.

    It can be directly concluded from the definition that the Greedy sampling always has serious repetition problem. In greedy sampling, each word only takes a fixed subsequent word and thus $\zeta n = 1$. Therefore, ARP can be very large and even diverges to infinity which is consistent with the observation that Greedy sampling always causes repetition sentences in many generation systems~\cite{fan2018hierarchical,holtzman2020curious}.
    
    \subsection{Average Repetition Probability Upper Bounds}
    To provide more intuitive understanding of the repetition problem directly, we simplify ARP and derive several upper bounds for further investigation.
    \begin{theorem}
      \label{thm:bound}
      If $\zeta n > \rho (B^2)$,
      $$R=\operatorname{tr}(B^2(\zeta n I-B^2)^{-1})\le \frac{\|B^2\|_*}{\sigma_n(\zeta nI-B^2)},$$
    in which $\|\cdot\|_*$ is the nuclear norm; $\sigma_n$ denotes the smallest singular value; $\rho(\cdot)$ is the spectral radius of the matrix; $I$ is the identity matrix.
    \end{theorem}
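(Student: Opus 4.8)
The plan is to first convert the infinite series defining $R$ into the stated closed form, and then bound that closed form by a single application of a Schatten-norm duality (von Neumann trace) inequality. Since the trace is a linear functional on the finite-dimensional space of $n\times n$ matrices, I would first factor out the scalar $\zeta^k n^k$ and pull the trace through the summation, writing
$$R = \sum_{k=1}^\infty \operatorname{tr}\!\left(\frac{B^{2k}}{\zeta^k n^k}\right) = \operatorname{tr}\!\left(\sum_{k=1}^\infty M^k\right), \quad M := \frac{B^2}{\zeta n}.$$
The hypothesis $\zeta n > \rho(B^2)$ is exactly the condition $\rho(M) = \rho(B^2)/(\zeta n) < 1$, which guarantees that the Neumann series $\sum_{k=0}^\infty M^k$ converges (in any matrix norm, hence entrywise) to $(I-M)^{-1}$; this both legitimizes the interchange of trace and summation and shows that $\zeta n I - B^2$ is nonsingular. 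Subtracting the $k=0$ term and simplifying $M(I-M)^{-1}$ then gives
$$\sum_{k=1}^\infty M^k = M(I-M)^{-1} = \frac{B^2}{\zeta n}\left(\frac{\zeta n I - B^2}{\zeta n}\right)^{-1} = B^2(\zeta n I - B^2)^{-1},$$
which establishes the equality $R = \operatorname{tr}\big(B^2(\zeta n I - B^2)^{-1}\big)$.

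For the inequality I would apply von Neumann's trace inequality, $|\operatorname{tr}(XY)| \le \sum_i \sigma_i(X)\sigma_i(Y)$, with $X = B^2$ and $Y = (\zeta n I - B^2)^{-1}$. Bounding every $\sigma_i(Y)$ by its largest value $\sigma_1(Y) = \|Y\|_2$ (the spectral norm, i.e.\ largest singular value) yields the nuclear/operator-norm duality bound $\operatorname{tr}(XY) \le \|X\|_* \, \|Y\|_2$, so that $R \le \|B^2\|_* \, \|(\zeta n I - B^2)^{-1}\|_2$. Finally, the spectral norm of the inverse is the reciprocal of the smallest singular value of the original matrix, $\|(\zeta n I - B^2)^{-1}\|_2 = 1/\sigma_n(\zeta n I - B^2)$, which is finite and positive precisely because $\zeta n I - B^2$ is nonsingular; substituting gives the claimed bound. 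Since $B \ge 0$ entrywise, each $B^{2k}$ has nonnegative diagonal, so $R \ge 0$ and no absolute values are needed.

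The routine parts here are the geometric-series algebra and the norm identifications; the step deserving the most care is the first paragraph, namely verifying that the hypothesis $\zeta n > \rho(B^2)$ is the precise condition permitting both the convergence of the series and the interchange of trace with summation, while simultaneously guaranteeing that $\zeta n I - B^2$ is invertible so that $\sigma_n(\zeta n I - B^2) > 0$ and the right-hand side is well defined. The inequality step is then essentially a one-line consequence of a standard matrix inequality, so I expect no real obstacle there beyond selecting the cleanest form of the duality to invoke.
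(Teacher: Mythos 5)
Your proposal is correct and takes essentially the same route as the paper's proof: sum the geometric series to obtain the closed form $\operatorname{tr}\bigl(B^2(\zeta n I - B^2)^{-1}\bigr)$, then apply von Neumann's trace inequality $\operatorname{tr}(XY)\le \sum_i \sigma_i(X)\sigma_i(Y)$ and bound each $\sigma_i\bigl((\zeta n I - B^2)^{-1}\bigr)$ by $\sigma_1\bigl((\zeta n I - B^2)^{-1}\bigr)=1/\sigma_n(\zeta n I - B^2)$. Your explicit verification that $\zeta n > \rho(B^2)$ guarantees Neumann-series convergence, the trace--sum interchange, and invertibility of $\zeta n I - B^2$ merely makes rigorous what the paper leaves implicit.
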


    The proof is provided in Appendix \aref{\ref{sec:proof-thm-bound}}{A.1} \cite{fu2020a}. Theorem \ref{thm:bound} gives a very straightforward upper bound of the ARP. It can be easily proved that $\rho (B^2)<1$ for a sub-stochastic matrix $B^2$. Therefore, Theorem \ref{thm:bound} fails only if $\zeta n \le \rho (B^2) < 1$, which means that all words have less than one subsequent word. It seldom happens in a normal language model. As neither the nuclear norm nor the smallest singular value has very intuitive physical meaning in generation models, we derive two corollaries based on Theorem \ref{thm:bound} to get more practical and intuitive upper bounds.

    \begin{corollary}
        \label{col:sample}
    $$R \le \frac{\sqrt{r}(\sum_{i=1}^n \sum_{j=1}^n (B_{ij}-\mu_i)^2 +\sum_{i=1}^n (1-b_i)^2)}{\sigma_n(\zeta nI-B^2)},$$
    where $r$ is the rank of $B^2$ and $\mu_i=\frac{\sum_{k=1}^n B_{ik}}{n}$ is the mean of each row of $B$.
    \end{corollary}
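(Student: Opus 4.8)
The plan is to start from the nuclear-norm bound already established in Theorem~\ref{thm:bound}, namely $R \le \|B^2\|_*/\sigma_n(\zeta n I - B^2)$, and then bound only the numerator $\|B^2\|_*$ from above by the stated expression $\sqrt{r}\big(\sum_{i,j}(B_{ij}-\mu_i)^2 + \sum_i (1-b_i)^2\big)$. Since the denominator is inherited untouched, the whole corollary reduces to controlling $\|B^2\|_*$ purely in terms of the entrywise quantities of $B$.

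First I would pass from the nuclear norm to the Frobenius norm. Because $B^2$ has rank $r$, it has at most $r$ nonzero singular values, so Cauchy--Schwarz gives $\|B^2\|_* = \sum_{i=1}^r \sigma_i(B^2) \le \sqrt{r}\,\big(\sum_{i=1}^r \sigma_i(B^2)^2\big)^{1/2} = \sqrt{r}\,\|B^2\|_F$. Next I would remove the square using submultiplicativity: since $\|XY\|_F \le \|X\|_{op}\|Y\|_F$ and $\|X\|_{op} \le \|X\|_F$, we get $\|B^2\|_F \le \|B\|_{op}\|B\|_F \le \|B\|_F^2$. This reduces the task to bounding $\|B\|_F^2 = \sum_{i,j} B_{ij}^2$ by the target numerator divided by $\sqrt{r}$.

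The final step is an algebraic identity relating $\|B\|_F^2$ to the centered quantity. The key observation is that, since $A$ is stochastic, each row satisfies $\sum_{j=1}^n B_{ij} + b_i = 1$, hence $1 - b_i = \sum_{j=1}^n B_{ij} = n\mu_i$. The per-row variance decomposition $\sum_j B_{ij}^2 = \sum_j (B_{ij}-\mu_i)^2 + n\mu_i^2$ then yields $\|B\|_F^2 = \sum_{i,j}(B_{ij}-\mu_i)^2 + n\sum_i \mu_i^2$. Since $n \le n^2$ and $\mu_i^2 \ge 0$, I can enlarge $n\sum_i\mu_i^2$ to $n^2\sum_i\mu_i^2 = \sum_i (1-b_i)^2$, giving $\|B\|_F^2 \le \sum_{i,j}(B_{ij}-\mu_i)^2 + \sum_i (1-b_i)^2$. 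Chaining the three inequalities and reinstating the denominator completes the proof.

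I expect the only real subtlety to be matching the precise form of the stated numerator: recognizing the identity $1-b_i = n\mu_i$ forced by stochasticity, and then deliberately loosening $n\sum_i\mu_i^2$ to $n^2\sum_i\mu_i^2$ so that the bound can be expressed through the EOS-column term $\sum_i(1-b_i)^2$. Everything else (the rank/Cauchy--Schwarz step and the Frobenius submultiplicativity step) is standard, and the hypothesis $\zeta n > \rho(B^2)$ guaranteeing $\sigma_n(\zeta n I - B^2) > 0$ is inherited directly from Theorem~\ref{thm:bound}.
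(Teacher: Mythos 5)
Your proof is correct and follows essentially the same route as the paper's: invoke Theorem~\ref{thm:bound}, bound $\|B^2\|_* \le \sqrt{r}\,\|B^2\|_F \le \sqrt{r}\,\|B\|_F^2$ via the rank/Cauchy--Schwarz step and Frobenius submultiplicativity, then apply the per-row variance decomposition together with the stochasticity identity $1-b_i = n\mu_i$. In fact you are slightly more careful than the paper at the final step: the paper asserts $\sum_{i=1}^n\sum_{j=1}^n \mu_i^2 = n^2\sum_{i=1}^n \mu_i^2$ as an equality (the double sum actually equals $n\sum_{i=1}^n \mu_i^2$), whereas you correctly present the passage as a deliberate loosening $n\sum_{i=1}^n\mu_i^2 \le n^2\sum_{i=1}^n\mu_i^2 = \sum_{i=1}^n(1-b_i)^2$, which is the inequality the bound genuinely requires.
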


    The proof is provided in Appendix \aref{\ref{sec:proof-col-sample}}{A.2} \cite{fu2020a}. It can be concluded from Corollary \ref{col:sample} that the upper bound of $R$ decreases as the variance of $B_{ij}$ decreases. The current methods including Nucleus sampling, Top-k sampling, and stochastic sampling are of this kind. Compared with the commonly used greedy sampling method in which only one word has a probability of 1 with others being 0, these methods significantly decrease the variance and thus lower the upper bounds. 
    It should be noted that Stochastic sampling always achieves the smallest variance. However, due to the long tail effect, it cannot be used since it has a very high probability of sampling low probability words. 
    It is also obvious that Temperature sampling alleviates the repetition problem because it controls the variance by changing the temperature parameter. 
    It is interesting to observe that directly truncating words with low probabilities actually increases the variance since it sets a lot of words' probabilities to 0 while increasing the probabilities of the remaining words. Therefore, Topk/Nucleus sampling alone cannot alleviate the repetition problem. They should always work together with the Stochastic sampling method or the Temperature sampling method, and the real role of Topk/Nucleus sampling is to solve the long-tail problem of stochastic sampling or Temperature sampling. If the temperature is fixed, using Topk/Nucleus sampling can even make the repetition problem worse.
    On the other hand, the length penalty strategy also alleviates the repetition problem. The reason is that it is equivalent to increasing the probability of going to the EOS state which increases the elements in $b$. From Corollary \ref{col:sample}, when $|b|$ increases, $R$ decreases. We will conduct experiments to validate all the above claims.

    \begin{corollary}
        \label{col:inout}
    If $\zeta n I-B^2$ is a diagonally dominant matrix,
    $$\begin{aligned}
    R\le\frac{\|B^2\|_*}{\min_{1\le i \le n} \{ \frac{1}{2}( \zeta n - \underbrace{\sum_{j=1}^n (B^2)_{ij}}_{outflow})+\frac{1}{2}(\zeta n - \underbrace{\sum_{k=1}^n (B^2)_{ki}}_{inflow}) \}}.
    \end{aligned}$$
    \end{corollary}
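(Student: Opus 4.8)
The plan is to build directly on Theorem~\ref{thm:bound}, which already gives $R \le \|B^2\|_*/\sigma_n(\zeta n I - B^2)$. Since the numerator is identical to the one in the claimed bound, the entire task reduces to producing a \emph{lower} bound on the smallest singular value of $M := \zeta n I - B^2$ expressed through the inflow and outflow quantities. Because $\sigma_n(M) = \min_{\|x\|=1}\|Mx\|$ sits in the denominator, any such lower bound translates immediately into the desired upper bound on $R$, so I would treat the corollary as a pure linear-algebra estimate on $\sigma_n(M)$.

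First I would symmetrize $M$. For any unit vector $x$, Cauchy--Schwarz gives $\|Mx\| \ge |x^{\top} M x| = |x^{\top} S x|$, where $S := \tfrac12(M + M^{\top})$ is the symmetric part of $M$ (the skew part contributes nothing to the quadratic form). When $S$ is positive definite this yields $x^{\top} S x \ge \lambda_{\min}(S) > 0$, hence $\|Mx\| \ge \lambda_{\min}(S)$ for every unit $x$, and taking the minimum over $x$ gives $\sigma_n(M) \ge \lambda_{\min}(S)$. This is precisely where the diagonal-dominance hypothesis on $\zeta n I - B^2$ enters: dominance by rows and columns forces $S$ to have strictly positive, dominant diagonal, so $S \succ 0$ and the reduction is non-vacuous.

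Next I would bound $\lambda_{\min}(S)$ from below using Gershgorin's circle theorem applied to the \emph{symmetric} matrix $S$, whose spectrum is real. Reading off the entries, $S_{ii} = \zeta n - (B^2)_{ii}$ and $S_{ij} = -\tfrac12\big((B^2)_{ij} + (B^2)_{ji}\big)$ for $j\ne i$, and using $B \ge 0 \Rightarrow B^2 \ge 0$ so that $|S_{ij}| = \tfrac12\big((B^2)_{ij} + (B^2)_{ji}\big)$, Gershgorin gives
$$\lambda_{\min}(S) \ge \min_{i}\Big( S_{ii} - \sum_{j\ne i}|S_{ij}| \Big) = \min_{i}\Big( \zeta n - (B^2)_{ii} - \tfrac12\sum_{j\ne i}(B^2)_{ij} - \tfrac12\sum_{k\ne i}(B^2)_{ki} \Big).$$

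The final step is purely algebraic: reinstating the diagonal term $(B^2)_{ii}$ into each of the two half-sums regroups the right-hand side as $\min_i\{\tfrac12(\zeta n - \sum_j (B^2)_{ij}) + \tfrac12(\zeta n - \sum_k (B^2)_{ki})\}$, i.e.\ the average of the outflow margin and the inflow margin, which is exactly the denominator claimed. Chaining $\sigma_n(M) \ge \lambda_{\min}(S) \ge \min_i\{\cdots\}$ with Theorem~\ref{thm:bound} then completes the argument. I expect the only genuinely subtle point to be the symmetrization inequality $\sigma_n(M) \ge \lambda_{\min}(S)$ and the verification that row-and-column diagonal dominance of $M$ transfers to $S$ and keeps $\lambda_{\min}(S)$ positive; by contrast, the Gershgorin estimate and the final regrouping of diagonal terms are routine.
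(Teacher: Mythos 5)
Your proof is correct and takes essentially the same route as the paper: both arguments keep the numerator $\|B^2\|_*$ from Theorem~\ref{thm:bound} and lower-bound $\sigma_n(\zeta n I - B^2)$ by the Gershgorin-type quantity $\min_{1\le i\le n}\bigl\{|a_{ii}|-\tfrac12\bigl(\sum_{j\ne i}|a_{ij}|+\sum_{j\ne i}|a_{ji}|\bigr)\bigr\}$, followed by the identical regrouping of the diagonal term into the inflow/outflow margins. The only difference is that the paper invokes this singular-value bound as a black-box lemma of Johnson (1989, Theorem~3), whereas you re-derive it from first principles via symmetrization, $\sigma_n(M)\ge\lambda_{\min}\bigl(\tfrac12(M+M^{\top})\bigr)$, plus Gershgorin applied to the symmetric part --- which is precisely the standard proof of that lemma, so your write-up is a correct, self-contained version of the paper's argument.
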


    The proof is provided in Appendix \aref{\ref{sec:proof-col-inout}}{A.3} \cite{fu2020a}. In Corollary \ref{col:inout}, the inflow for a word is the probability sum of all words that take it as the subsequent word. If it is too big, the upper bound can be magnified extensively and fails to limit the ARP. This observation theoretically justifies the claim that high inflow words are more likely to go back to itself and cause the repetition problem. We can also conclude that increasing $|b|$ can decrease the outflow term and thus decrease $R$. It also shows that the length penalty method can help alleviate the repetition problem. As is shown in many previous works, most repetition sentences contain high-frequency words. By Corollary \ref{col:inout}, we can conclude that it is not the high-frequency words, but the high inflow words that really lead to repetition generation. We propose our novel rebalanced encoding based on this corollary to reduce the high inflow words.
    
    \subsection{Extension to General Generation Model}
    
    To extend our analysis to the general generation model, we define the ARP for the general generation model and show that if the transition matrix is controlled, the ARP will not deviate a lot.
    
    \begin{definition}
      [Average Repetition Probability for General Generation Model] For a general generation model, the sub-transition matrix for the $k$th step can be expressed as $B'_k=B+T_k$, in which $T_k\in \mathbb{R}^{n\times n}$ is a perturbation matrix and each element for ${T_k}_{(ij)}$ is independently distributed with mean 0 and we assume that the variance is controlled as $\delta^2 < \frac{1}{n}$. The general average repetition probability is defined as:
      $$R'=\sum_{r=1}^\infty \operatorname{tr}(\frac{\prod_{k=1}^{2r}B'_k}{\zeta^r n^{r}}).
      $$
      \label{def:garp}
    \end{definition}
    
    In the general generation model, each step has its own sub-transition matrix $B'_k$. Since the current word is mostly affected by the previous word, we denote $B'_k$ as applying a ``small'' perturbation matrix $T_k$ on $B$. We show that if $T_k$ is ``small'', $R'$ will not deviate much from $R$.

    \begin{theorem}
        \label{thm:extend}
        For a general generation model, if $\sum_{i=1}^n B_{ij}^2<1,\zeta n>4$, then for every constant $a>0$ we have $$\operatorname{Pr}(|R-R'|\ge a)\le \frac{3\zeta n\delta^2 }{a^2(\zeta n-4)(\zeta n-1)}.$$
    \end{theorem}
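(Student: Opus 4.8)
The plan is to apply Chebyshev's inequality to the random variable $R'$, after showing that its mean is exactly $R$. Write $X_r=\operatorname{tr}(\prod_{k=1}^{2r}B'_k)$ and $Y_r=\operatorname{tr}(B^{2r})$, so that $R'=\sum_{r\ge1}X_r/(\zeta n)^r$ and $R=\sum_{r\ge1}Y_r/(\zeta n)^r$. Since the perturbations $T_k$ are independent across steps with $\mathbb{E}[T_k]=0$, independence gives $\mathbb{E}[\prod_{k=1}^{2r}B'_k]=\prod_{k=1}^{2r}\mathbb{E}[B'_k]=B^{2r}$; interchanging the expectation with the (absolutely convergent, under the stated conditions) sum then yields $\mathbb{E}[R']=\sum_{r\ge1}\operatorname{tr}(B^{2r})/(\zeta n)^r=R$. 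Consequently $\operatorname{Pr}(|R-R'|\ge a)=\operatorname{Pr}(|R'-\mathbb{E}[R']|\ge a)\le \operatorname{Var}(R')/a^2$, and the whole claim reduces to the variance estimate $\operatorname{Var}(R')\le 3\zeta n\delta^2/((\zeta n-4)(\zeta n-1))$.

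The core of the argument is therefore bounding $\operatorname{Var}(R')=\sum_{r,s\ge1}\operatorname{Cov}(X_r,X_s)/(\zeta n)^{r+s}$. Expanding each $\prod_{k=1}^{2r}(B+T_k)$ over the $2^{2r}$ choices of $B$ or $T_k$ at each position, I would index the fluctuation $X_r-Y_r$ by the nonempty set $S\subseteq\{1,\dots,2r\}$ of positions carrying a perturbation factor. The key simplification is that, upon taking the expectation of a product of two such monomials, every term in which some $T_k$ appears an odd number of times vanishes (mean zero and independence); the survivors are exactly the pairs using the \emph{same} perturbation positions, which forces the shared set to lie in the common range $\{1,\dots,2\min(r,s)\}$. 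Each survivor carries a factor $\delta^2$ per perturbation from the entrywise second moments $\mathbb{E}[(T_k)_{ij}^2]=\delta^2$, while the remaining $B$ factors form runs whose contribution is a sum over the free walk indices (as in the single-perturbation case $S=\{m\}$, where cyclicity gives $M^{(S)}=\operatorname{tr}(T_mB^{2r-1})$ and hence contribution $\delta^2\|B^{2r-1}\|_F^2$).

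Bounding these index sums is where the two hypotheses enter and constitutes the main obstacle. Each index left free after imposing the gluing constraints ranges over $n$ values, and the hypothesis $\delta^2<1/n$ is precisely what makes each free index contribute a factor $n\delta^2<1$, preventing the perturbation weights from accumulating. The $B$-runs separating the perturbations are controlled using the sub-stochasticity of $B$ together with the column condition $\sum_{i=1}^n B_{ij}^2<1$, which bounds the Frobenius norms $\|B^m\|_F$ that appear whenever a perturbation glues the two walks.

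After these estimates, the double sum reorganizes, via $s=r+t$, into a product of two geometric series: one over the matched block shared by the two walks and one over the length-$2t$ tail of common $B$ factors in the denominator $(\zeta n)^{r+s}$. Their ratios are governed by $\zeta n$; the more restrictive of the two is summable exactly when $\zeta n>4$ and produces the factor $\zeta n-4$, while the other produces $\zeta n-1$, with the per-perturbation weight contributing the overall $\zeta n\delta^2$ in the numerator and tracking the constants yielding the coefficient $3$. The step I expect to be hardest is obtaining uniform-in-$r$ control of the glued double-walk index sums for an arbitrary number of perturbations: one must verify that adding perturbations—each bringing a factor $\delta^2$ and extra gluing constraints but also more placements—keeps every block sum geometric with the claimed ratio, so that the series converge to the closed form above rather than to something larger. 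The unbiasedness computation and the final Chebyshev step are, by contrast, routine.
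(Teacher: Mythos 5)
Your skeleton coincides with the paper's: establish $\mathbb{E}[R'-R]=0$ from independence and mean-zero perturbations, reduce the claim via Chebyshev to the variance bound $\operatorname{Var}(R'-R)\le 3\zeta n\delta^2/((\zeta n-4)(\zeta n-1))$, expand each $\prod_{k=1}^{2r}(B+T_k)$ into $4^r$ monomials indexed by the set of perturbed positions, and extract the constants from geometric series with ratios $4/(\zeta n)$ and $1/(\zeta n)$ --- indeed the paper's constants arise exactly as $n\sum_{k\ge 1}(4^k-1)\delta^2/(\zeta n)^k=\frac{3\zeta n\delta^2}{(\zeta n-4)(\zeta n-1)}$, matching your predicted origin of the factors $(\zeta n-4)$ and $(\zeta n-1)$. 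Your reading of the hypotheses is also the paper's: $\delta^2<1/n$ is used to tame products of two perturbations (the paper's auxiliary lemma shows $\operatorname{Var}[(T_pT_q)_{ij}]=n\delta^4\le\delta^2$), and $\sum_{i=1}^n B_{ij}^2<1$ controls the $B$-runs (giving $\operatorname{Var}[(BT_p)_{ij}]\le\delta^2$, $\operatorname{Var}[(T_pB)_{ij}]\le\delta^2$).

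However, there is a genuine gap: your proposal never actually proves the variance estimate, and you flag the decisive step --- uniform-in-$r$ control of the glued double-walk sums for arbitrarily many perturbations --- as open. As written this is an outline whose core is missing. It is worth noting that the paper does not carry out the full covariance computation you set up either: its proof bounds the variance of each of the $4^k-1$ perturbation monomials at order $k$ by $n\delta^2$ (via the entrywise lemma above) and simply \emph{adds} these, i.e., it treats all monomials as uncorrelated. That is legitimate across \emph{distinct} perturbation sets (a factor $T_m$ appearing to the first power annihilates the cross expectation), but it silently drops precisely the terms you correctly identify as surviving: monomials from different orders $r\ne s$ carrying the \emph{same} perturbation set, e.g. $\operatorname{Cov}\bigl(\operatorname{tr}(T_1B),\operatorname{tr}(T_1B^3)\bigr)=\delta^2\sum_{i,j}B_{ji}(B^3)_{ji}$, which is nonzero in general. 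Consequently your more honest double sum $\sum_{r,s}\operatorname{Cov}(X_r,X_s)/(\zeta n)^{r+s}$ does not collapse to the paper's diagonal count, and completing your plan would require showing those off-diagonal blocks are summable within the same constant --- an estimate neither you nor the paper supplies. To land the stated bound at the paper's level of rigor, you should adopt its cruder per-monomial accounting (entrywise variance lemma plus the $4^k-1$ count); to complete the argument as you designed it, you must furnish the deferred glued-walk estimate, and you should be prepared for the resulting constant to differ from the one in the theorem statement.
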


    The proof is provided in Appendix \aref{\ref{sec:proof-thm-extend}}{A.4} \cite{fu2020a}. Theorem \ref{thm:extend} shows that given a generation model that has a different transition matrix $B'_k$ at each step, if the transition matrix does not deviate a lot from that in the Markov generation model, the deviation of ARP is bounded with high probability. It should be noted that the distribution of ${T_k}_{(ij)}$ is influenced by $B$. Nevertheless, Theorem \ref{thm:extend} only depends on the mean and the variance regardless of the specific value constraint.

    \section{Rebalanced Encoding}
    It can be concluded from Corollary \ref{col:inout} that the ARP upper bound decreases as the inflow term decreases. It can also be inferred from Corollary \ref{col:sample} that if the variance of $B$ decreases, the ARP upper bound will also decrease. Inspired by these two properties, we propose a novel rebalanced encoding (RE) which merges the high inflow pairs into single words and thus reduces the inflow term and the variance of $B$ simultaneously. Different from most of the existing methods that focus on the post-editing of the probability distribution, the RE method uses a novel word encoding scheme and thus makes the model predict a good probability distribution instead of post-editing. 
    
    Our proposed RE method first applies traditional BPE encoding \cite{sennrich2016neural} to the training text. Then, it makes a statistical transition matrix with the encoded training text. It picks high inflow pairs that have transition probability higher than a threshold $\gamma$ and merges the word pair as a whole word. Specifically, if two words are split from BPE as subwords, we simply merge them and remove the BPE tags. For example, for the high inflow word pair ``(de@@, crease)'', we replace all ``de@@ crease'' to ``decrease''. If the two words are words that have not been split by BPE, we merge them by adding a ``=='' tag between them and treat them as a single new word. For example, for the high inflow word pair ``(involved, in)'', we replace all ``involved in'' to ``involved==in''. We rebuild the transition matrix and repeat the above procedure until all the probabilities in the transition matrix are less than $\gamma$ or we reach a specific iteration epoch. The detailed algorithm is shown in Algorithm \ref{algo:re} and we follow the presentation of using Python code \cite{sennrich2016neural} to make it more concise.
    
    \begin{algorithm}[t]
      \begin{lstlisting}[language=Python]
def learnRE(words : list, N : int, gamma : float):
  merges = []
  for step in range(N):
    id_to_word = list(set(words))
    word_to_id = {w:i for i,w in enumerate(id_to_word)}
    M = numpy.zeros([len(id_to_word), len(id_to_word)])
    for i in range(len(words) - 1):
      M[word_to_id[words[i]], word_to_id[words[i+1]]]+=1
    M =  M / M.sum(1).reshape(-1,1).clip(1)
    if M.max() <= gamma: break
    merges += [(id_to_word[i1], id_to_word[i2]) for  
              i1, i2 in zip(*(M > gamma).nonzero())]
    words = applyRE(words, merges)
  return merges
    
def applyRE(words : list, merges : list):
  for merge in merges:
    for i in range(len(words) - 1):
      if tuple(words[i : i + len(merge)]) == merge:
        words[i : i + len(merge)] = [
            "==".join(merge).replace("@@==", "")]
        i -= 1
  return words
          
                  
      \end{lstlisting}
      \caption{Rebalanced Encoding Algorithm
      }
      \label{algo:re}
      \end{algorithm}

    \section{Experiments}

    \subsection{Experimental Settings}
    We conduct our experiments on two common text generation tasks, namely, the neural machine translation task and the language modeling task.
    
    \textbf{Neural Machine Translation (NMT).} The NMT task translates sentences from the source domain to the target domain with the most prevalent Transformer \cite{vaswani2017attention,ott2019fairseq} architecture. We adopt the widely used IWSLT’14 English-German dataset containing 160K sentences pairs. We encode all the sentences with the byte-pair encoding \cite{sennrich2016neural} with subword units of 10,000 for each language. The overall translation performance is evaluated by BLEU \cite{papineni2002bleu} and ROUGE$_L$ \cite{lin2004rouge}. The repetition problem is quantitatively evaluated with rep-w, rep-n \cite{welleck2020neural} and rep-r. rep-w measures the repetition words proportion by the fraction of the current token that occurs in the previous $w$ tokens. It can be calculated as $\operatorname{rep-w}=\frac{1}{|\mathcal{D}|}\sum_{s\in \mathcal{D}}\frac{1}{|s|}\sum_{t=1}^{|s|}\mathds{1}[s_t\in s_{t-w-1:t-1}]$, in which $s$ denotes generated sentences in the result set $\mathcal{D}$ and $|s|$ denotes the word count of $s$; $s_t$ is the $t$th word in $s$ while $s_{t-w-1:t-1}$ is the sub-sequence of $s$ from the $(t-w-1)$th word to the $(t-1)$th word. The rep-w metric extends the rep/$\ell$ in \citet{welleck2020neural} in which the sentence length is fixed. rep-n is defined as $\operatorname{rep-n} =1.0-\frac{|\{\tilde{s}|\exists p \in [1, |s| - n+1], \tilde{s}=s_{p:p+n-1}\}|}{|s|-n+1}$. rep-n depicts the repetition problem for the specific gram $n$ while rep-w characterizes the repetition problem in the view of words. To give a quantitative metric for the repetition problem by length and avoid setting specific parameters, we propose the rep-r metric. rep-r stands for the ratio of the repetition snippet in a sentence measured by length. It is defined as $\operatorname{rep-r}=\frac{|\{i|(s_i=s_j\land s_{i+1}=s_{j+1}, \exists j\ne i) \lor (s_i=s_k\land s_{i-1}=s_{k-1}, \exists k\ne i) \}|}{|s|}$, in which $\land$ and $\lor$ are logical ``and'' and ``or'' respectively. The details of hyper-parameters settings are presented in Appendix \aref{\ref{sec:appendix-hyperpars}}{A.6} \cite{fu2020a}.
    
    \begin{table}[t]
      \centering
      \small
      
      \begin{tabular}{@{~}l@{~}@{~}l@{~}@{~}l@{~}@{~}l@{~}@{~}l@{~}@{~}l@{~}}
        \toprule
        {Method} &    rep-w$\downarrow$ & rep-n$\downarrow$ &    rep-r$\downarrow$ &    BLEU$\uparrow$ & ROUGE$_L$ $\uparrow$ \\
        \midrule
        Greedy       &  0.0883 &    0.0330 &  0.0512 &  0.352 &   0.606 \\
        Stochastic   &  0.0783 &    0.0272 &  0.0337 &  0.222 &   0.472 \\
        Temp ($t$=0.15)  &  0.0879 &    0.0328 &  0.0511 &  0.351 &   0.605 \\
        Topk ($k$=10)      &  0.0882 &    0.0329 &  0.0507 &  0.350 &   0.605 \\
        Topk ($k$=40)      &  0.0881 &    0.0329 &  0.0511 &  0.350 &   0.604 \\
        Nucleus ($p$=0.9)  &  0.0878 &    0.0328 &  0.0508 &  0.349 &   0.603 \\
        Nucleus ($p$=0.95) &  0.0882 &    0.0329 &  0.0510 &  0.350 &   0.604 \\
        LP ($\beta$=6)         &  0.0863 &    0.0322 &  0.0500 &  0.349 &   0.605 \\
        \hline
        RE ($\gamma$=0.15)      &  0.0768 &    0.0281 &  0.0419 &  0.350 &   0.608 \\
        RE ($\gamma$=0.1)       &  0.0743 &    0.0275 &  0.0417 &  0.350 &   0.607 \\
        RE ($\gamma$=0.05)      &  0.0585 &    0.0211 &  0.0296 &  0.340 &   0.603 \\
        RE ($\gamma$=0.02)      &  0.0434 &    0.0158 &  0.0221 &  0.335 &   0.600 \\
        \bottomrule
        \end{tabular}

      \caption{Experimental results for NMT task.\footnotemark }
      
      \label{tab:nmt}
      \end{table}

    \begin{table}[t]
      \centering
      \small

      \begin{tabular}{lllll}
        \toprule
        {Method} &   rep-w$\downarrow$ & rep-n$\downarrow$ &   rep-r$\downarrow$ &   ppl-c \\
        \midrule
        Greedy       &  0.590 &    0.733 &  0.917 &  0.150 \\
        Stochastic   &  0.120 &    0.092 &  0.155 &  7.320 \\
        Temp ($t$=0.75)  &  0.254 &    0.215 &  0.409 &  1.060 \\
        Topk ($k$=40)      &  0.235 &    0.188 &  0.363 &  0.969 \\
        Topk ($k$=10)      &  0.251 &    0.195 &  0.348 &  0.962 \\
        Nucleus ($p$=0.9)  &  0.234 &    0.195 &  0.368 &  1.020 \\
        Nucleus ($p$=0.95) &  0.227 &    0.191 &  0.322 &  1.090 \\
        LP ($\beta$=7)         &  0.547 &    0.660 &  0.829 &  1.050 \\
        \hline
        RE ($\gamma$=0.1)       &  0.196 &    0.180 &  0.321 &  0.974 \\
        RE ($\gamma$=0.08)      &  0.180 &    0.156 &  0.286 &  1.010 \\
        \bottomrule
        \end{tabular}

    \caption{Experimental results for LM task.\footnotemark[\value{footnote}]}
    \label{tab:lm}
    \end{table}
    
    \footnotetext{$\uparrow$ means larger is better while $\downarrow$ means lower is better.}

    \begin{figure*}[t]
      \centering
      \begin{minipage}[t]{0.42\textwidth}
      \centering
      \includegraphics[width=0.9\columnwidth]{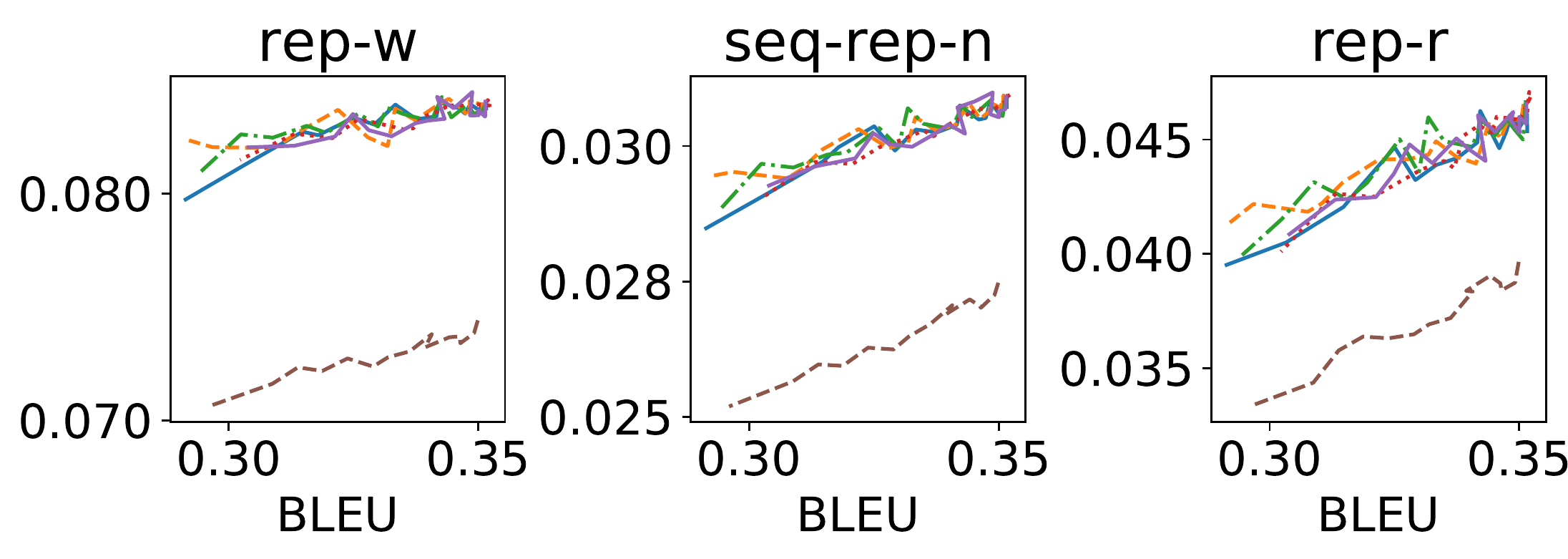}
      \caption*{(a) NMT}
      \label{fig:bleu-rep}
      \end{minipage}
      \quad 
      \begin{minipage}[t]{0.55\textwidth}
      \centering
      \includegraphics[width=0.92\columnwidth]{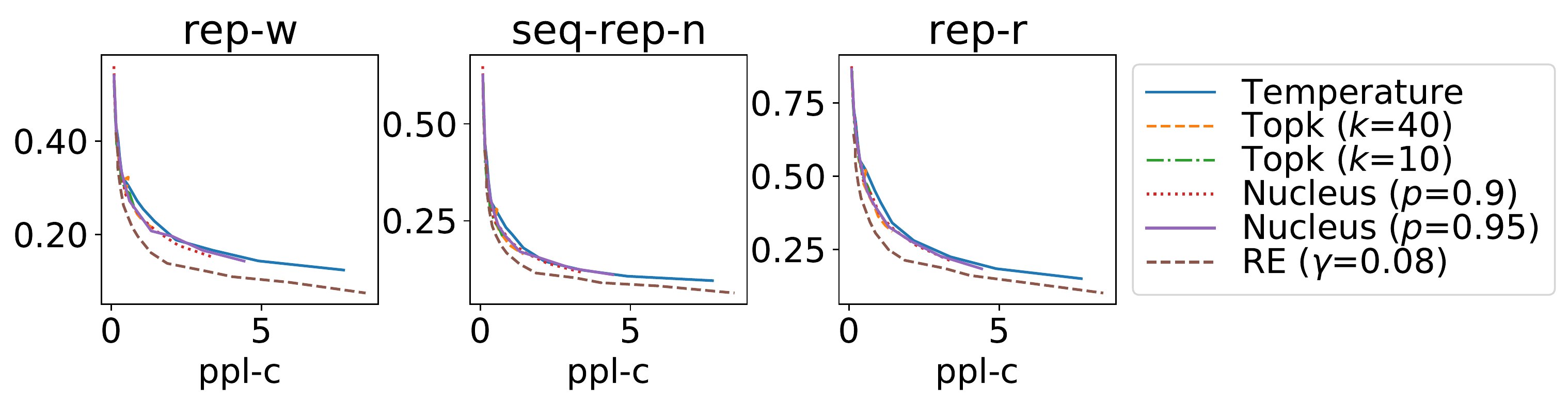}
      \caption*{(b) LM}
      \label{fig:ppl-scores}
      \end{minipage}
      \caption{Repetition performance  balancing.}
      \label{fig:rpb}
    \end{figure*}

    \begin{figure*}[t]
      \begin{minipage}{0.31\textwidth}
      \centering
      \includegraphics[width=0.82\columnwidth]{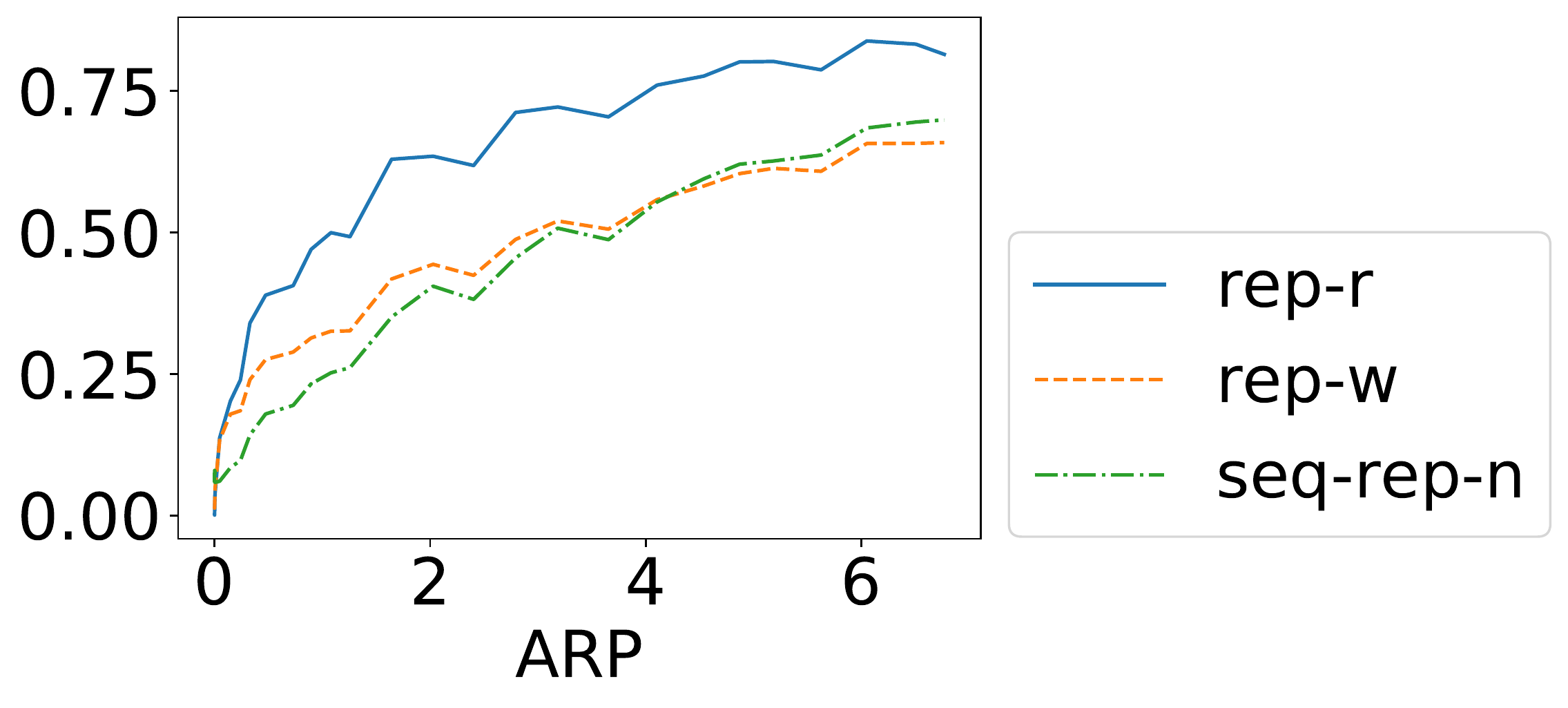}
      \caption{Correlation of ARP and repetition metrics.}
      \label{fig:arp-correlation}
      \end{minipage}
      \quad 
      \begin{minipage}{0.31\textwidth}
      \centering
      \includegraphics[width=0.9\columnwidth]{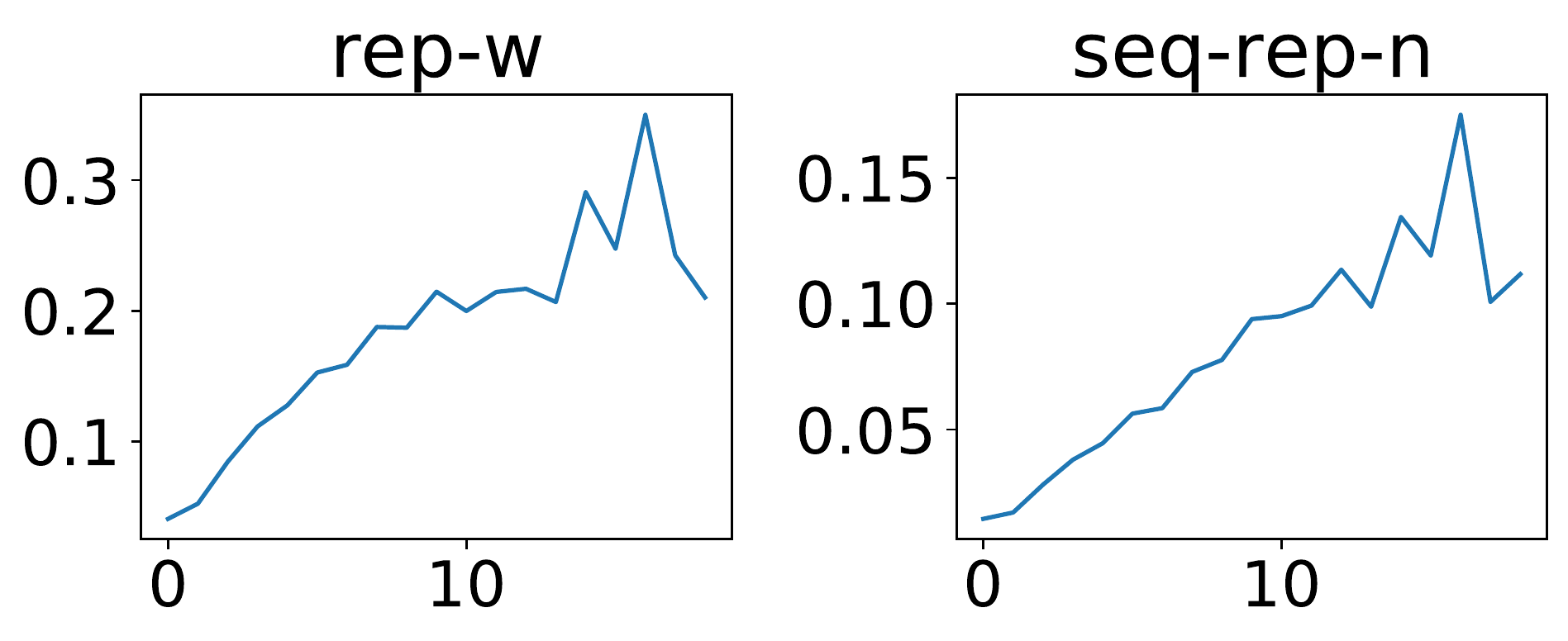}
      \caption{Influence of high inflow pair. $x$ axis is the high inflow pair count.}
      \label{fig:reason-high-inflow}
      \end{minipage}
      \quad 
      \begin{minipage}{0.3\textwidth}
          \centering
          \small
          \begin{tabular}{@{~}l@{~}@{~}l@{~}@{~}l@{~}}
            \toprule
            {Method} & Overall$\uparrow$ & Repetition$\uparrow$ \\
            \midrule
            Greedy      &   3.642 &      3.342 \\
            Nucleus ($p$=0.9) &    3.650 &      3.383 \\
            RE ($\gamma$=0.08)      &    3.800 &        3.400 \\
            \bottomrule
            \end{tabular}
          \captionof{table}{Human Evaluation.\footnotemark[\value{footnote}]}
          \label{tab:humaneval}
        \end{minipage}
    \end{figure*}

    \textbf{Language Modeling (LM).} LM task is a fundamental task for natural language processing which predicts the probability of a word conditioned on previous words. We adopt the language model with Transformer decoder which has also been utilized by the most prevalent  GPT models~\cite{radfordimproving,radfordlanguage,brown2020language}. We use the Wiki-103 dataset~\cite{merity2017pointer} and encode the text with byte pair encoding with subword units around 10,000. We use the same language model as \citet{ott2019fairseq} and keep all hyper-parameters the same. Since the perplexity score strongly depends on the encoding approaches we use for making subword units, it is not directly comparable when different encoding approaches are applied. We propose a new metric called perplexity coefficient (ppl-c) which divides the perplexity of the generated text with the average perplexity of the text in the test set. Therefore, a sentence is more likely to be written by humans if its ppl-c score closes to 1.0. To evaluate the model, we take words in the dictionary to feed them into the language model and generate subsequent sequences. We guarantee that the initial words for all models are the same. The repetition problem is evaluated with rep-w, rep-n \cite{welleck2020neural}, and rep-r. The Topk, Nucleus, and RE methods are combined with the Temperature sampling method together. To make the models comparable with each other, we tune all the models to make the ppl-c scores as close to 1.0 as possible which is closest to human-written text. The details of hyper-parameters settings are presented in Appendix \aref{\ref{sec:appendix-hyperpars}}{A.6} \cite{fu2020a}.
    
    \textbf{Comparison Methods.}
    As our proposed RE method does not need any additional modification of the generation model. we compare it with the most widely used out-of-the-box methods including Greedy sampling, Stochastic sampling, Temperature sampling, Topk sampling, Nucleus sampling, and Length Penalty (LP) method.
    The details of each method have been discussed in Section \ref{sec:def}.

    \begin{table*}[t]
      \centering
      \small
      
      \begin{tabular}{@{~}L{1cm}@{~}@{~}L{15cm}@{~}}
      \toprule
      Method &    \multicolumn{1}{c}{Generated Text} \\
      \midrule
      Greedy & Battalion , the 1st Battalion , the 2nd Battalion , the 1st Battalion , the 1st Battalion , the 1st Battalion , the 1st Battalion , the 1st Battalion , the 1st Battalion , the 2nd Battalion , the 1st Battalion , the 1st Battalion , the 1st Battalion , the 2nd Battalion , the 1st Battalion , the 1st Battalion , the 1st Battalion , the 1st Battalion ... \\
      \hline
      Topk ($k$=10) & Battalion of the Royal Marines were also assigned to the 1st Division , the 1st Division , and the 1st Division . The 1st Division was assigned to the 1st Division , and the 1st Division was assigned to the 1st Division . The 1st Division was assigned to the 1st Division , and the 1st Division was assigned to the 1st Division . The 1st ...\\ 
      \hline
      RE & Battalion was the first unit to be deployed to Iraq in March 1971 . In April 1971=, the battalion was involved=in Operation Ira@@ q@@ i Fre@@ edom , a major operation in=the Ira@@ q@@ i conflict . In January 1971=, the battalion was involved=in Operation Ira@@ q@@ i Fre@@ edom , a major operation in=the Ira@@ q@@ i Gulf . In March 1971=, the battalion was involved=in Operation Ira@@ q@@ i Fre@@ edom , a major operation in=the Ira@@ q@@ i Gulf . In April 1971=, the battalion was involved=in Operation Ira@@ q@@ i Fre@@ edom , a major operation in=the Ira@@ q@@ i Gulf . $\langle$eos$\rangle$ \\
      \bottomrule
      \end{tabular}

    \caption{Experimental results for LM task.}
    \label{tab:casestudy}
    \end{table*}
    
    \subsection{Experimental Results}
    
    \textbf{Main Results.} The NMT experimental results are shown in Table \ref{tab:nmt} while the LM Experimental results are shown in Table \ref{tab:lm}. We have the following conclusions from the results. (1) The RE method alleviates the repetition problem and outperforms existing methods significantly. (2) The repetition problem of the Greedy method is more serious than other models while it hardly appears in the Stochastic sampling method. This result justifies our theoretical analysis in Corollary \ref{col:sample} that higher variance in the transition matrix leads to a higher probability of repetition. (3) The LP method and RE method both alleviate the repetition problem. The reason is that the RE method controls the inflow term while the LP method controls the outflow term and thus they limit the upper bound of ARP. This result justifies our theoretical analysis in Corollary \ref{col:inout}. (4) Empirically, the performance metrics become slightly worse (BLEU and ROUGE$_L$ decrease or ppl-c increases when larger than 1.0) as the repetition problem is alleviated. This is because existing performance metrics are not very sufficiently sensitive to the repetition problem and they will not become significantly better even when the repetition problem is well alleviated. For example, the ppl-c metric will not become worse even if it adds many repetition sequences with high prediction probability. Therefore, the existing metrics are not enough to provide a comprehensive evaluation and thus we conduct a human evaluation.

    \noindent \textbf{Human Evaluation.} We conduct a human evaluation and the results are shown in Table \ref{tab:humaneval}. We sample 120 generated text in the LM task for each model. The sampled text is scored by human helpers to evaluate the overall performance and repetition performance. The detailed questionnaire is shown in Appendix \aref{\ref{sec:appendix-questionnaire}}{A.5} \cite{fu2020a}. It can be concluded from the results that our proposed method outperforms the comparison methods in both alleviating the repetition problem and improving the overall score simultaneously.

    \noindent \textbf{Repetition Performance Balancing.} As is shown in our main results, mitigating the repetition problem and improving the evaluation metrics are not simultaneously feasible since existing performance metrics are not sensitive to repetition. In order to show that our proposed RE method achieves the best balance between the two goals, we conduct a repetition performance balancing experiment. We tune the temperature parameter for the models to get different repetition metrics at different levels of performance (BLEU or ppl-c). The NMT task results are illustrated in Fig. \ref{fig:rpb} (a) while the LM task results are shown in Fig. \ref{fig:rpb} (b). It can be concluded from the results that (1) The RE method achieves the lowest repetition scores at the same performance score in both NMT and LM tasks. (2) The Topk sampling and Nucleus sampling alleviate the repetition problem of the Temperature sampling method because they help alleviate the long-tail effect and thus improve the overall performance. This experiment justifies the claims in Corollary \ref{col:sample}.

    \noindent \textbf{Well-definedness of ARP.} To show that our proposed definition of ARP is well defined, we conduct an experiment to study the relationship between the theoretical ARP and the repetition metrics. To calculate the theoretical ARP, we conduct this experiment on a Markov generation model. The Markov transition matrix is calculated by counting words in Wiki-103. We sample each word with the Temperature sampling method and we change the temperature parameter to study the relation of ARP and repetition metrics. The results are shown in Fig. \ref{fig:arp-correlation}. It can be concluded from the results that as the ARP grows, all repetition metrics are increasing. This positive correlation shows that ARP is well-defined.

    \noindent \textbf{Influence of High Inflow Pairs.} To show that the high inflow pairs do cause the repetition problem, we conduct an experiment in the NMT task to show the relationship between the high inflow pair count and the repetition metrics. As shown in Fig. \ref{fig:reason-high-inflow}, we calculate the high inflow pair count and the repetition scores in each sentence. It can be concluded from the results that if one sentence contains too many high inflow pairs, it is more likely to get a higher repetition score. This result also justifies Corollary \ref{col:inout}.

    \noindent \textbf{Case Study.} To get a better understanding how the repetition is alleviated, we show a case study by sampling results from different methods. The results are shown in Table \ref{tab:casestudy}. It can be observed from the results that the Greedy sampling method has a severe repetition problem. The Topk sampling method alleviates the repetition problem by minimizing ppl-c and can thus increase the temperature when the ppl-c level is fixed. Our proposed RE method gives the best trade-off between the repetition problem and the overall performance.

    \section{Related Works}
    Text generation tasks aim at generating sentences based on given input. It can be divided into two main categories, namely, the sequence-to-sequence based systems \cite{sutskever2014sequence,bahdanau2014neural,luong2015effective,wu2016google,vaswani2017attention} and the language model based systems \cite{radfordimproving,radfordlanguage,brown2020language}. The sequence-to-sequence framework has been applied to solve many tasks including neural machine translation \cite{sutskever2014sequence,bahdanau2014neural,luong2015effective}, summarization \cite{nallapati2016abstractive} and data-to-text generation \cite{lebret2016neural,wiseman2017challenges,wiseman2018learning,novikova2017e2e,fu2020open,fu2020partially,fu2020dynamic}. On the other hand, language model based systems are applied in the open-ended generation tasks. For example, OpenAI proposes the GPT models \cite{radfordimproving,radfordlanguage,brown2020language} which are based on language models.
    
    
    Many empirical out-of-the-box methods without too much model modification  have been proposed to alleviate the repetition problem by designing proper sampling methods to replace the widely used greedy sampling method \cite{opennmt,ott2019fairseq}. The details of these methods have been discussed in Section \ref{sec:def}. Apart from the out-of-the-box methods, \citet{welleck2020neural} propose the unlikelihood training approach with new components to solve the problem of likelihood training. 
    
    \section{Conclusions}
    We propose a novel theoretical analysis for the repetition problem. Our theory shows that the repetition problem is caused by the language itself. Too many high inflow words in the human language make it easy to go back to themselves and inevitably increase the repetition probability. Guided by this theory, we show that most of the existing methods are minimizing the ARP upper bounds explicitly or implicitly. Furthermore, we propose a novel rebalanced encoding method to tackle the repetition problem by reducing the high inflow terms. Our experiments validate our proposed theory and demonstrate the effectiveness of rebalanced encoding.
    
    \bibliography{reference}
    \clearpage
    \appendix
    \begin{center}
      {\LARGE \textbf{Appendix. Supplementary Material}}
      \end{center}
    
    \renewcommand{\thesection}{A.\arabic{section}}
    \setcounter{theorem}{0}
    \setcounter{lemma}{0}
    \setcounter{corollary}{0}
    
    \section{Proof of Theorem \ref{thm:bound}}\label{sec:proof-thm-bound}

    \begin{theorem}
      If $\zeta n > \rho (B^2)$,
      $$R=\operatorname{tr}(B^2(\zeta n I-B^2)^{-1})\le \frac{\|B^2\|_*}{\sigma_n(\zeta nI-B^2)},$$
    in which $\|\cdot\|_*$ is the nuclear norm; $\sigma_n$ denotes the smallest singular value; $\rho(\cdot)$ is the spectral radius of the matrix; $I$ is the identity matrix.
    \end{theorem}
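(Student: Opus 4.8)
The plan is to split the argument into an exact evaluation of $R$ followed by a single norm estimate. First I would treat the defining series as a matrix geometric (Neumann) series. Setting $C = B^2/(\zeta n)$, the definition reads $R = \operatorname{tr}\left(\sum_{k=1}^\infty C^k\right)$, where the trace is pulled outside the sum by linearity and continuity on the convergent series. The hypothesis $\zeta n > \rho(B^2)$ gives $\rho(C) = \rho(B^2)/(\zeta n) < 1$, which is precisely the condition guaranteeing that $\sum_{k\ge 1} C^k$ converges and equals $C(I-C)^{-1} = (I-C)^{-1} - I$. Substituting $I - C = (\zeta n I - B^2)/(\zeta n)$ and simplifying, the $\zeta n$ factors cancel and I obtain the closed form $R = \operatorname{tr}\!\left(B^2(\zeta n I - B^2)^{-1}\right)$, which is the first equality in the statement.

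For the inequality I would invoke trace duality (H\"older's inequality for Schatten norms) in the form $|\operatorname{tr}(XY)| \le \|X\|_* \, \|Y\|_2$, taking $X = B^2$ and $Y = (\zeta n I - B^2)^{-1}$, where $\|\cdot\|_2$ denotes the spectral norm. This gives $R \le \|B^2\|_* \, \|(\zeta n I - B^2)^{-1}\|_2$ at once. Since the singular values of the inverse of an invertible matrix are the reciprocals of its singular values, the spectral norm of $(\zeta n I - B^2)^{-1}$ equals $1/\sigma_n(\zeta n I - B^2)$; substituting this completes the bound.

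Two points require care. I would first confirm that $\zeta n I - B^2$ is genuinely invertible, so that $\sigma_n > 0$ and the claimed denominator is meaningful: its eigenvalues are $\zeta n - \lambda_i(B^2)$, and since $|\lambda_i(B^2)| \le \rho(B^2) < \zeta n$ under the hypothesis, none of them vanish. I would also note that $R$ is real and nonnegative, since each $\operatorname{tr}\!\left((B^2)^k\right)$ is a sum of nonnegative diagonal entries (the matrix $B$, and hence every power of $B^2$, has nonnegative entries), so the absolute value in the H\"older step can be dropped without loss.

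The mechanical parts here are the geometric-series evaluation and the reciprocal-singular-value identity, both routine. The step that deserves the most attention, and which I regard as the crux, is the application of the matrix H\"older inequality: one must pair the \emph{nuclear} norm of $B^2$ with the \emph{spectral} norm of the resolvent (the conjugate Schatten exponents $1$ and $\infty$), rather than any other combination, in order to land exactly on the stated right-hand side. Everything else then follows by direct substitution.
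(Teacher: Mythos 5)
Your proposal is correct and follows essentially the same route as the paper: the paper likewise sums the Neumann series to obtain $R=\operatorname{tr}(B^2(\zeta nI-B^2)^{-1})$, then applies von Neumann's trace inequality $\operatorname{tr}(XY)\le\sum_i\sigma_i(X)\sigma_i(Y)$ and bounds each $\sigma_i((\zeta nI-B^2)^{-1})$ by $\sigma_1((\zeta nI-B^2)^{-1})=1/\sigma_n(\zeta nI-B^2)$, which is exactly your Schatten $(1,\infty)$ H\"older step carried out in two moves. Your added checks (invertibility of $\zeta nI-B^2$ under the spectral-radius hypothesis, and nonnegativity of $R$) are sound and, if anything, slightly more careful than the paper's write-up.
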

    
    \begin{proof}
      $$\begin{aligned}
        R&=\operatorname{tr} (\frac{B^2}{\zeta n}) + \operatorname{tr} (\frac{B^4}{\zeta^2 n^2})+\cdots\\
        &=\operatorname{tr}(\frac{B^2}{\zeta n}+\frac{B^4}{\zeta^2 n^2}+\cdots)\\
        &=\operatorname{tr}(B^2(\zeta nI-B^2)^{-1})\\
        &\le \sum_{i=1}^n\sigma_i(B^2)\sigma_i((\zeta nI-B^2)^{-1})\\
        &\le \sum_{i=1}^n\sigma_i(B^2)\sigma_1((\zeta nI-B^2)^{-1})\\
        &= \frac{\|B^2\|_*}{\sigma_n(\zeta nI-B^2)}.
        \end{aligned}$$
    \end{proof}
    
    \section{Proof of Corollary \ref{col:sample}}\label{sec:proof-col-sample}
    
    \begin{corollary}
      \label{col:sample}
    $$R \le \frac{\sqrt{r}(\sum_{i=1}^n \sum_{j=1}^n (B_{ij}-\mu_i)^2 +\sum_{i=1}^n (1-b_i)^2)}{\sigma_n(\zeta nI-B^2)},$$
    where $r$ is the rank of $B^2$ and $\mu_i=\frac{\sum_{k=1}^n B_{ik}}{n}$ is the mean of each row of $B$.
    \end{corollary}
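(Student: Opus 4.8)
The plan is to start from the estimate already proved in Theorem \ref{thm:bound}, namely $R \le \|B^2\|_*/\sigma_n(\zeta n I - B^2)$, and merely replace the nuclear norm in the numerator by the interpretable variance-type quantity, leaving the denominator untouched. The entire task therefore reduces to proving
$$\|B^2\|_* \le \sqrt{r}\Big(\sum_{i=1}^n\sum_{j=1}^n (B_{ij}-\mu_i)^2 + \sum_{i=1}^n(1-b_i)^2\Big).$$

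First I would pass from the nuclear norm to the Frobenius norm. Since $B^2$ has rank $r$, only $r$ of its singular values are nonzero, and Cauchy--Schwarz on the singular-value vector gives $\|B^2\|_* = \sum_{i=1}^r \sigma_i(B^2) \le \sqrt{r}\,\big(\sum_{i=1}^r \sigma_i(B^2)^2\big)^{1/2} = \sqrt{r}\,\|B^2\|_F$. Next I would remove the squaring by submultiplicativity of the Frobenius norm, $\|B^2\|_F = \|B\cdot B\|_F \le \|B\|_F^2$, which is itself just entrywise Cauchy--Schwarz applied to $(B^2)_{ij}=\sum_k B_{ik}B_{kj}$. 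After these two steps it remains to show that $\|B\|_F^2 = \sum_{i,j} B_{ij}^2$ is dominated by the target expression.

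The final step is where the stochastic structure enters. For each row $i$, the value $\mu_i$ is the mean of the $n$ entries $B_{i1},\dots,B_{in}$, so the elementary identity ``sum of squares equals sum of squared deviations plus $n$ times the squared mean'' gives $\sum_{j=1}^n B_{ij}^2 = \sum_{j=1}^n (B_{ij}-\mu_i)^2 + n\mu_i^2$. Because $A$ is stochastic and $b_i$ is the EOS outflow, the row sum satisfies $\sum_{j=1}^n B_{ij} = 1-b_i$, i.e. $n\mu_i = 1-b_i$, whence $n\mu_i^2 = (1-b_i)^2/n \le (1-b_i)^2$. Summing over $i$ yields $\|B\|_F^2 \le \sum_{i,j}(B_{ij}-\mu_i)^2 + \sum_i (1-b_i)^2$, and chaining the three inequalities closes the argument.

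Each step is individually routine; the only thing to get right is the bookkeeping in the last step, correctly translating the row-mean/variance identity together with the stochasticity relation $1-b_i=\sum_j B_{ij}$ into the stated form. I would flag that the passage $n\mu_i^2 \le (1-b_i)^2$ (together with $\sqrt r \ge 1$) introduces slack, so the resulting bound is deliberately looser than $\sqrt{r}\,\|B\|_F^2$. That looseness is acceptable because the role of the corollary is interpretive: it exhibits the numerator as a row-variance term plus an outflow term, which is precisely what supports the paper's discussion of how the various sampling schemes and the length penalty act on $R$.
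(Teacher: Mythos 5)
Your proposal is correct and follows essentially the same route as the paper's proof: nuclear norm to Frobenius norm via the rank-$r$ Cauchy--Schwarz bound, submultiplicativity $\|B^2\|_F\le\|B\|_F^2$, then the row-wise variance identity combined with the stochasticity relation $\sum_j B_{ij}=1-b_i$. Your bookkeeping in the last step is in fact tidier than the paper's, which writes the slack step $n\sum_i\mu_i^2\le n^2\sum_i\mu_i^2$ as an equality ($n^2\sum_i\mu_i^2$ where $\sum_{i,j}\mu_i^2=n\sum_i\mu_i^2$), whereas you correctly isolate the slack as $n\mu_i^2=(1-b_i)^2/n\le(1-b_i)^2$.
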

    
    \begin{proof}
      $$\begin{aligned}
      \|B^2\|_* &\le \sqrt{r}\|B^2\|_F \\
      & \le \sqrt{r}\|B\|_F^2 \\
      & = \sqrt{r}\sum_{i=1}^n \sum_{j=1}^n B_{ij}^2 \\
      &= \sqrt{r}\sum_{i=1}^n \sum_{j=1}^n ( B_{ij}^2 - 2\mu_i^2 + 2\mu_i^2) \\
      &= \sqrt{r}\sum_{i=1}^n \sum_{j=1}^n (B_{ij}^2 - 2\mu_i\frac{1}{n}\sum_{k=1}^n B_{ik} + 2\mu_i^2) \\
      &= \sqrt{r}\sum_{i=1}^n \sum_{j=1}^n (B_{ij}^2 - 2\mu_i B_{ij} + \mu_i^2+\mu_i^2)\\
      &= \sqrt{r}\sum_{i=1}^n \sum_{j=1}^n ((B_{ij}-\mu_i)^2 +\mu_i^2) \\
      &= \sqrt{r}(\sum_{i=1}^n \sum_{j=1}^n (B_{ij}-\mu_i)^2 + n^2\sum_{i=1}^n \mu_i^2) \\
      &= \sqrt{r}(\sum_{i=1}^n \sum_{j=1}^n (B_{ij}-\mu_i)^2 +\sum_{i=1}^n (1-b_i)^2), \\
      \end{aligned}$$
      where $\|\cdot\|_F$ is the Frobenius norm and it is a submultiplicative norm; $b_i$ is the $i$th element of the vector $b$.
      \end{proof}

    \section{Proof of Corollary \ref{col:inout}}\label{sec:proof-col-inout}
    We first give a lemma by \citet{johnson1989gersgorin} and then prove Corollary \ref{col:inout} based on it.
    \begin{lemma}
      \label{lemma:Johnson}
      [\cite{johnson1989gersgorin}, Theorem 3] For an n-by-n matrix $A = (a_{ij})$, the smallest singular value of A is bounded below by 
      $$\min_{1\le i \le n} \{|a_{ii}|-\frac{1}{2}( \sum_{j=1,j\ne i}^n |a_{ij}| + \sum_{j=1,j\ne i}^n |a_{ji}|)  \} $$
    \end{lemma}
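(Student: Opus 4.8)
The plan is to derive Johnson's bound from two classical ingredients: that the smallest singular value of a matrix dominates the smallest eigenvalue of its Hermitian part, and that Ger\v{s}gorin's theorem supplies a lower bound on the smallest eigenvalue of a Hermitian matrix via its diagonal entries and deleted absolute row sums. Throughout, write $\sigma_{\min}(A)$ for the smallest singular value, $\operatorname{Herm}(A)=\tfrac12(A+A^{*})$ for the Hermitian part, $R_i'=\sum_{j\ne i}|a_{ij}|$ and $C_i'=\sum_{j\ne i}|a_{ji}|$ for the deleted row and column sums, and $\mu=\min_i\{\,|a_{ii}|-\tfrac12(R_i'+C_i')\,\}$ for the quantity to be bounded. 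First I would normalize the diagonal: let $D=\operatorname{diag}(e^{-i\arg a_{ii}})$, with $D_{ii}=1$ whenever $a_{ii}=0$, and set $\tilde A=DA$. Because $D$ is unitary, left multiplication preserves singular values, so $\sigma_{\min}(\tilde A)=\sigma_{\min}(A)$; moreover $\tilde a_{ii}=|a_{ii}|\ge 0$ is real while $|\tilde a_{ij}|=|a_{ij}|$ for all $i,j$, so the deleted row and column sums, and hence $\mu$, are unchanged. This step is needed only for complex $A$ (for the real nonnegative matrices arising in our setting the diagonal is already nonnegative and $D=I$), and its role is to let the diagonal of the Hermitian part appear as $|a_{ii}|$ rather than merely $\operatorname{Re}(a_{ii})$.

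The conceptual heart of the argument is the inequality $\sigma_{\min}(\tilde A)\ge\lambda_{\min}(\operatorname{Herm}(\tilde A))$. For any unit vector $x$ one has $\langle\operatorname{Herm}(\tilde A)x,x\rangle=\operatorname{Re}\langle \tilde A x,x\rangle\le|\langle \tilde A x,x\rangle|\le\|\tilde A x\|$ by Cauchy--Schwarz. Taking $x^{\star}$ to be a unit vector with $\|\tilde A x^{\star}\|=\sigma_{\min}(\tilde A)$ gives $\lambda_{\min}(\operatorname{Herm}(\tilde A))\le\langle\operatorname{Herm}(\tilde A)x^{\star},x^{\star}\rangle\le\|\tilde A x^{\star}\|=\sigma_{\min}(\tilde A)$, and the inequality is trivial when the left-hand eigenvalue is negative. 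This converts a statement about singular values into one about the eigenvalues of a genuinely Hermitian matrix. The point of routing through the Hermitian part, rather than through the Hermitian dilation with the off-diagonal blocks $A$ and $A^{*}$, is that the dilation has a vanishing diagonal, so Ger\v{s}gorin applied to it yields only an upper bound on the largest singular value; the diagonal dominance hidden in $|a_{ii}|$ is recovered precisely by $\operatorname{Herm}(\tilde A)$.

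Finally I would apply the Ger\v{s}gorin lower bound to $G=\operatorname{Herm}(\tilde A)$: every eigenvalue of $G$ lies in some disc $|\lambda-G_{ii}|\le\sum_{j\ne i}|G_{ij}|$, hence $\lambda_{\min}(G)\ge\min_i\{G_{ii}-\sum_{j\ne i}|G_{ij}|\}$. Here $G_{ii}=\operatorname{Re}(\tilde a_{ii})=|a_{ii}|$, while $|G_{ij}|=\tfrac12|\tilde a_{ij}+\overline{\tilde a_{ji}}|\le\tfrac12(|a_{ij}|+|a_{ji}|)$, so $\sum_{j\ne i}|G_{ij}|\le\tfrac12(R_i'+C_i')$. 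Chaining the three steps gives $\sigma_{\min}(A)=\sigma_{\min}(\tilde A)\ge\lambda_{\min}(G)\ge\min_i\{|a_{ii}|-\tfrac12(R_i'+C_i')\}=\mu$, which is exactly the asserted bound. I expect the main obstacle to be the second step: recognizing that the estimate must pass through $\operatorname{Herm}(A)$ (together with the phase normalization that promotes $\operatorname{Re}(a_{ii})$ to $|a_{ii}|$), since the more obvious dilation route produces a bound in the wrong direction. Once the Hermitian-part inequality and the scaling are in place, the conclusion is a direct invocation of the standard Ger\v{s}gorin estimate.
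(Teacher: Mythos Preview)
The paper does not prove this lemma at all: it is stated with attribution to Johnson (1989) and then used as a black box in the proof of Corollary~\ref{col:inout}. Your argument is correct and is essentially the standard proof of Johnson's result---reducing to the Hermitian part $\tfrac12(A+A^{*})$ after a unitary diagonal phase normalization so that Ger\v{s}gorin applied to $\operatorname{Herm}(\tilde A)$ produces the $|a_{ii}|$ rather than $\operatorname{Re}(a_{ii})$, and then chaining $\sigma_{\min}(A)\ge\lambda_{\min}(\operatorname{Herm}(\tilde A))$ with the Ger\v{s}gorin lower bound. There is nothing to compare against in the paper itself; your write-up simply supplies the proof that the paper omits by citation.
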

    
    \begin{corollary}
    If $\zeta n I-B^2$ is a diagonally dominant matrix,
    $$\begin{aligned}
    R\le\frac{\|B^2\|_*}{\min_{1\le i \le n} \{ \frac{1}{2}( \zeta n - \underbrace{\sum_{j=1}^n (B^2)_{ij}}_{outflow})+\frac{1}{2}(\zeta n - \underbrace{\sum_{k=1}^n (B^2)_{ki}}_{inflow}) \}}
    \end{aligned}$$
    \end{corollary}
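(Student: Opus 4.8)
The plan is to combine Theorem \ref{thm:bound} with the singular-value lower bound of Lemma \ref{lemma:Johnson}, so that essentially no new idea beyond careful bookkeeping is required. Theorem \ref{thm:bound} already supplies the numerator $\|B^2\|_*$ and reduces the whole problem to lower-bounding $\sigma_n(\zeta n I - B^2)$; the entire content of the corollary is therefore to show that this smallest singular value is at least the quantity appearing in the denominator. I would first record this reduction explicitly, so that the target becomes a single scalar inequality about $\sigma_n$.

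Next I would set $A = \zeta n I - B^2$ and identify its entries: the diagonal is $a_{ii} = \zeta n - (B^2)_{ii}$ and the off-diagonal is $a_{ij} = -(B^2)_{ij}$ for $i \ne j$. Since $B$ has non-negative entries, $B^2 \ge 0$ entrywise, so $|a_{ij}| = (B^2)_{ij}$ for every $i \ne j$, and the diagonal-dominance hypothesis guarantees $a_{ii} = \zeta n - (B^2)_{ii} > 0$, whence $|a_{ii}| = a_{ii}$. Substituting these into Lemma \ref{lemma:Johnson} yields
$$\sigma_n(A) \ge \min_{1 \le i \le n}\Big\{ (\zeta n - (B^2)_{ii}) - \tfrac12 \sum_{j \ne i}(B^2)_{ij} - \tfrac12\sum_{j \ne i}(B^2)_{ji}\Big\}.$$

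Then I would rearrange the bracketed expression into the symmetric inflow/outflow form. Splitting the positive diagonal term as $\zeta n - (B^2)_{ii} = \tfrac12(\zeta n - (B^2)_{ii}) + \tfrac12(\zeta n - (B^2)_{ii})$ and folding the diagonal entry $(B^2)_{ii}$ back into each off-diagonal sum converts the partial row sum $\sum_{j \ne i}(B^2)_{ij}$ into the full row sum $\sum_{j=1}^n (B^2)_{ij}$ (the outflow) and the partial column sum into the full column sum $\sum_{k=1}^n (B^2)_{ki}$ (the inflow). The bracket then reads exactly $\tfrac12(\zeta n - \sum_j (B^2)_{ij}) + \tfrac12(\zeta n - \sum_k (B^2)_{ki})$, and substituting this lower bound for $\sigma_n(\zeta n I - B^2)$ into the denominator of Theorem \ref{thm:bound} gives the claimed inequality.

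The only real obstacle is bookkeeping rather than any deep argument: Lemma \ref{lemma:Johnson} sums only off-diagonal entries, whereas the stated denominator uses full row and column sums, so the step of re-absorbing the diagonal term is where a sign or indexing slip could creep in. I should also verify that the diagonal-dominance hypothesis is precisely what licenses dropping the absolute value on $a_{ii}$, ensuring the Gershgorin-type lower bound is positive and hence non-vacuous; this is why that assumption is stated as a condition of the corollary.
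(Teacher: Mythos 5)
Your proposal is correct and follows essentially the same route as the paper's own proof: invoke Theorem \ref{thm:bound} to reduce everything to lower-bounding $\sigma_n(\zeta n I - B^2)$, apply the Gershgorin-type singular-value bound of Lemma \ref{lemma:Johnson} after noting the signs of the entries of $\zeta n I - B^2$, and fold the diagonal term $(B^2)_{ii}$ into the off-diagonal row and column sums to obtain the symmetric outflow/inflow denominator. The bookkeeping step you flag as the only delicate point is exactly the step the paper carries out.
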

    
    \begin{proof}
      Since $B$ is a sub-stochastic matrix, $(B^2)_{ij}\in [0,1)$.  $(\zeta n I-B^2)_{ii} > 0, i \in [1,n]$ and $(\zeta n I-B^2)_{ij} < 0, i\ne j$. $|(\zeta n I-B^2)_{ii}|-\sum_{j=1,j\ne i}^n |(\zeta n I-B^2)_{ji}|=\zeta n - \sum_{j=1}^n (B^2)_{ij}$. By Lemma \ref{lemma:Johnson}, we have 
      $$\begin{aligned}
      &\sigma_n(\zeta n I-B^2)\ge \\
      &\min_{1\le i \le n} \{ \frac{1}{2}( \zeta n - \sum_{j=1}^n (B^2)_{ij})+\frac{1}{2}( \zeta n - \sum_{k=1}^n (B^2)_{ki}) \}
      \end{aligned}$$ 
    \end{proof}

    \section{Proof of Theorem \ref{thm:extend}}\label{sec:proof-thm-extend}
    We first prove some simple facts in Lemma \ref{lemma:mul} and then prove Corollary \ref{col:inout} based on it.
    \begin{lemma}
      \label{lemma:mul}
      Let $T_p,T_q$ be independently drawn from the same distribution as described in Definition \ref{def:garp}. Furthermore, let $B$ be a sub-stochastic matrix with $\sum_{i=1}^n B_{ij}^2<1$. We have the following facts:
      $$\begin{aligned}
        &\mathbb{E}[(T_pB)_{ij}] = 0; \mathbb{E}[(BT_p)_{ij}] = 0; \mathbb{E}[(T_pT_q)_{ij}] = 0;\\
        &\operatorname{Var}[(T_pB)_{ij}]\le \delta^2;\operatorname{Var}[(BT_p)_{ij}]\le \delta^2;\operatorname{Var}[(T_pT_q)_{ij}]\le \delta^2; 
      \end{aligned}
      $$
    \end{lemma}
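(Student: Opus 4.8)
The plan is to verify each of the six claims directly from the entrywise assumptions recorded in Definition \ref{def:garp}: every entry ${T_k}_{(ij)}$ has mean $0$ and variance at most $\delta^2 < \frac{1}{n}$, distinct entries are independent, and distinct matrices $T_p, T_q$ are independent. For the three expectation identities I would expand each matrix product and push the expectation through the finite sum. Writing $(T_pB)_{ij} = \sum_{l=1}^n (T_p)_{il} B_{lj}$ and using linearity with $\mathbb{E}[(T_p)_{il}] = 0$ gives $\mathbb{E}[(T_pB)_{ij}] = 0$, and the case $(BT_p)_{ij} = \sum_{l} B_{il} (T_p)_{lj}$ is identical. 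For $(T_pT_q)_{ij} = \sum_l (T_p)_{il} (T_q)_{lj}$, independence of $T_p$ and $T_q$ lets me factor $\mathbb{E}[(T_p)_{il}(T_q)_{lj}] = \mathbb{E}[(T_p)_{il}]\,\mathbb{E}[(T_q)_{lj}] = 0$, so the sum vanishes termwise.

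The variance bounds for $T_pB$ and $BT_p$ rest on the summands of each dot product being independent, so that variances add. For $(T_pB)_{ij} = \sum_l (T_p)_{il} B_{lj}$ the entries $(T_p)_{il}$ (varying $l$) are mutually independent, whence $\operatorname{Var}[(T_pB)_{ij}] = \delta^2 \sum_l B_{lj}^2$; the column hypothesis $\sum_{l} B_{lj}^2 < 1$ then closes this to $\le \delta^2$. For $(BT_p)_{ij}$ the same argument yields $\operatorname{Var} = \delta^2 \sum_l B_{il}^2$, and here I would instead invoke substochasticity directly: since $B_{il} \in [0,1]$ we have $B_{il}^2 \le B_{il}$, so $\sum_l B_{il}^2 \le \sum_l B_{il} \le 1$, again giving the bound $\delta^2$.

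The only step demanding genuine care is the variance of the product $T_pT_q$. Writing $(T_pT_q)_{ij} = \sum_l (T_p)_{il}(T_q)_{lj}$, I first note that for distinct $l$ the products involve disjoint, hence independent, collections of entries, so the variances add once more. For a single term, independence of $X = (T_p)_{il}$ and $Y = (T_q)_{lj}$, each with mean $0$ and variance $\delta^2$, gives $\operatorname{Var}[XY] = \mathbb{E}[X^2]\mathbb{E}[Y^2] - (\mathbb{E}[X]\mathbb{E}[Y])^2 = \delta^4$. Summing over $l$ produces $\operatorname{Var}[(T_pT_q)_{ij}] = n\delta^4 = \delta^2\,(n\delta^2)$, and this is precisely where the standing assumption $\delta^2 < \frac{1}{n}$ is essential, since it forces $n\delta^2 < 1$ and hence the desired bound $\operatorname{Var}[(T_pT_q)_{ij}] < \delta^2$. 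I expect this final computation — keeping track that the cross terms vanish and that the product variance is $\delta^4$ rather than $\delta^2$ — to be the main, and essentially the only, obstacle; the remaining five assertions are routine consequences of linearity and independence.
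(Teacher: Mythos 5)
Your proof is correct and follows essentially the same route as the paper's: expand each product entrywise, use linearity and independence for the zero means, add variances of independent summands to get $\delta^2\sum_l B_{lj}^2$ and $\delta^2\sum_l B_{il}^2$ (bounded by the column hypothesis and substochasticity respectively), and compute $\operatorname{Var}[(T_pT_q)_{ij}]=n\delta^4\le\delta^2$ via $\delta^2<\frac{1}{n}$. If anything, you spell out two steps the paper leaves implicit — the distinct justifications for the row versus column square-sum bounds, and the independence/uncorrelatedness of the summands $(T_p)_{il}(T_q)_{lj}$ across $l$ — but the argument is the same.
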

    
    \begin{proof}
    $$\begin{aligned}
      \mathbb{E}[(T_pB)_{ij}]=\mathbb{E}[\sum_{k=1}^n(T_p)_{ik}B_{kj}]=\sum_{k=1}^n\mathbb{E}[(T_p)_{ik}]B_{kj}=0
    \end{aligned}$$
    $$\begin{aligned}
      \mathbb{E}[(BT_p)_{ij}]=\mathbb{E}[\sum_{k=1}^nB_{ik}(T_p)_{kj}]=\sum_{k=1}^nB_{ik}\mathbb{E}[(T_p)_{kj}]=0
    \end{aligned}$$
    $$\begin{aligned}
      \mathbb{E}[(T_pT_q)_{ij}]=\mathbb{E}[\sum_{k=1}^n(T_p)_{ik}(T_q)_{kj}]=\sum_{k=1}^n\mathbb{E}[(T_p)_{kj}]\mathbb{E}[(T_q)_{kj}]=0
    \end{aligned}$$
    
    $$\begin{aligned}
      \operatorname{Var}[(T_pB)_{ij}]&=\operatorname{Var}[\sum_{k=1}^n(T_p)_{ik}B_{kj}]
      =\sum_{k=1}^nB_{kj}^2\operatorname{Var}[(T_p)_{ik}]
      \le \delta^2
      \end{aligned}$$
    
    $$\begin{aligned}
      \operatorname{Var}[(BT_p)_{ij}]=\operatorname{Var}[\sum_{k=1}^nB_{ik}(T_p)_{kj}]
      =\sum_{k=1}^nB_{ik}^2\operatorname{Var}[(T_p)_{kj}]
      \le \delta^2
    \end{aligned}$$
    
    $$\begin{aligned}
      \operatorname{Var}[(T_pT_q)_{ij}]&=\operatorname{Var}[\sum_{k=1}^n(T_p)_{ik}(T_q)_{kj}]=n\delta^4\le \delta^2
    \end{aligned}$$
    
    \end{proof}
    
    \begin{theorem}
      \label{thm:extend}
      For a general generation model, if $\sum_{i=1}^n B_{ij}^2<1,\zeta n>4$, then for every constant $a>0$, we have $$\operatorname{Pr}(|R-R'|\ge a)\le \frac{3\zeta n\delta^2 }{a^2(\zeta n-4)(\zeta n-1)}.$$
    \end{theorem}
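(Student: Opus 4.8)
The plan is to apply Chebyshev's inequality to the random variable $R'$, which needs two ingredients: that $R'$ is unbiased for $R$, and a bound on its variance. First I would show $\mathbb{E}[R']=R$. Expanding each factor, $\prod_{k=1}^{2r}(B+T_k)$ is a sum of $2^{2r}$ monomials indexed by the subset $S\subseteq\{1,\dots,2r\}$ of positions carrying a $T$; writing the corresponding monomial $M_S$, we have $R'-R=\sum_{r\ge1}(\zeta n)^{-r}\sum_{\emptyset\ne S\subseteq\{1,\dots,2r\}}\operatorname{tr}(M_S)$. Since the $T_k$ are independent with mean $0$, any monomial containing at least one $T$ carries an isolated mean-zero factor that can be pulled out of the expectation, so $\mathbb{E}[\operatorname{tr}(M_S)]=0$ for every nonempty $S$ and hence $\mathbb{E}[R']=R$. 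Chebyshev then gives $\operatorname{Pr}(|R-R'|\ge a)\le \operatorname{Var}(R')/a^2$, and it remains to prove $\operatorname{Var}(R')\le \tfrac{3\zeta n\delta^2}{(\zeta n-4)(\zeta n-1)}$.

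The key structural observation for the variance is an orthogonality coming from the same mean-zero and independence property: for two monomials $M_S$ and $M_{S'}$, possibly arising from different values of $r$, the product $\operatorname{tr}(M_S)\operatorname{tr}(M_{S'})$ contains some $T_k$ to the first power unless $S=S'$, so $\mathbb{E}[\operatorname{tr}(M_S)\operatorname{tr}(M_{S'})]=0$ whenever $S\ne S'$. This diagonalizes the variance: grouping the expansion by the index set $S$ yields $\operatorname{Var}(R')=\sum_{\emptyset\ne S}\operatorname{Var}(Y_S)$, where $Y_S$ aggregates the contribution of $S$, so we are left with a sum of uncorrelated pieces rather than an unwieldy fully coupled double sum.

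Next I would bound each contribution using Lemma \ref{lemma:mul} as the engine. The content of that lemma is that left or right multiplication of a perturbation by $B$, and the product of two perturbations, all produce matrices that are again mean $0$ with entrywise variance at most $\delta^2$; the inequality $n\delta^4\le\delta^2$, i.e.\ $\delta^2<1/n$, is exactly what prevents the variance from growing as monomials get longer. Iterating this, a whole chain $T_{k_1}B^{\cdots}\cdots T_{k_t}$ behaves like a single perturbation, so each $\operatorname{tr}(M_S)$ has variance controlled by $\delta^2$ together with norm factors of the intervening $B$-blocks that the hypothesis $\sum_i B_{ij}^2<1$ and sub-stochasticity keep bounded. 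The combinatorics then supply the $4^r$: there are exactly $\sum_{t=1}^{2r}\binom{2r}{t}=4^r-1$ nonempty subsets $S\subseteq\{1,\dots,2r\}$, and assembling the per-subset bounds against the prefactor $(\zeta n)^{-r}$ produces the clean estimate $\operatorname{Var}(R')\le\delta^2\sum_{r\ge1}(4^r-1)(\zeta n)^{-r}$. Summing the two resulting geometric series, with ratios $4/(\zeta n)$ and $1/(\zeta n)$, both convergent because $\zeta n>4$, gives $\delta^2\bigl(\tfrac{4}{\zeta n-4}-\tfrac{1}{\zeta n-1}\bigr)=\tfrac{3\zeta n\delta^2}{(\zeta n-4)(\zeta n-1)}$, as required.

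The main obstacle is this variance estimate, and within it the uniform per-monomial bound: I must show that arbitrarily long products of perturbations interlaced with powers of $B$ do not amplify the variance, which is precisely where $n\delta^2\le1$ and the column-norm hypothesis have to be combined with care, and then track the constants and the cross-terms between different $r$ so the bookkeeping collapses to the single clean series $\sum_r(4^r-1)(\zeta n)^{-r}$. The unbiasedness and the Chebyshev reduction are routine by comparison, and the convergence constraint $\zeta n>4$ enters exactly through the $4^r$ subset count.
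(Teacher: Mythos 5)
Your proposal follows essentially the same route as the paper's proof: expand $R'-R$ into the $4^r-1$ monomials at level $r$ that contain at least one $T_k$, invoke Lemma \ref{lemma:mul} to get $\mathbb{E}[R'-R]=0$ and entrywise variance at most $\delta^2$, bound $\operatorname{Var}(R'-R)$ by the series $\sum_{r\ge 1}(4^r-1)\delta^2(\zeta n)^{-r}=\frac{3\zeta n\delta^2}{(\zeta n-4)(\zeta n-1)}$, and finish with Chebyshev's inequality. The two points you explicitly flag as remaining work---passing from entrywise variance to the variance of a trace (a priori this costs a factor of $n$, which the paper's intermediate bound $n\sum_{k}(4^k-1)\delta^2/(\zeta n)^{k}$ carries but whose final equality silently drops) and the cross-terms between monomials sharing the same $T$-index set across different $r$---are exactly the steps the paper's own proof also glosses over by simply adding per-monomial variances, so your sketch, including the orthogonality grouping that the paper leaves implicit, matches the published argument in both structure and level of rigor.
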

    
    \begin{proof}
      $$\begin{aligned}
        R'-R&=\sum_{r=1}^\infty \operatorname{tr}(\frac{\prod_{k=1}^{2r}B'_k}{\zeta^r n^{r}})-\sum_{r=1}^\infty \operatorname{tr}(\frac{B^{2r}}{\zeta^r n^{r}})\\
        &=\operatorname{tr}(\frac{(B+T_1)(B+T_2)}{\zeta n}+\frac{\prod_{k=1}^4(B+T_k)}{\zeta^2 n^2}+\cdots)-\\ & \ \ \ \ \ \ \ \ \ \ \ \operatorname{tr}(\frac{B^2}{\zeta n}+\frac{B^4}{\zeta^2 n^2}+\cdots)\\
          &=\operatorname{tr}(\frac{T_1B+BT_2+T_1T_2}{\zeta n}+\frac{B^2T_3B+B^3T_4+B^2T_3T_4}{\zeta^2 n^2}+\\
          &\ \ \ \ \frac{T_1B^3+T_1BT_3B+T_1B^2T_4+T_1BT_3T_4}{\zeta^2 n^2}+\\
          &\ \ \ \ \frac{BT_2B^2+BT_2T_3B+BT_2BT_4+BT_2T_3T_4}{\zeta^2 n^2}+ \\
          &\ \ \ \ \frac{T_1T_2B^2+T_1T_2T_3B+T_1T_2BT_4+T_1T_2T_3T_4}{\zeta^2 n^2}+\cdots)
      \end{aligned}$$
      By Lemma \ref{lemma:mul}, we can get:
      $$\begin{aligned}
        \mathbb{E}[R'-R]=0
      \end{aligned}
      $$
      
      $$\begin{aligned}
        \operatorname{Var}[R'-R]&\le (\frac{3n\delta^2}{\zeta n}+\frac{15n\delta^2}{\zeta^2 n^2}+\cdots)=n\sum_{k=1}^\infty \frac{(4^k-1)\delta^2}{\zeta^k n^k}\\
      &=\frac{3\zeta n\delta^2}{(\zeta n-4)(\zeta n-1)}
      \end{aligned}$$
      
      By Chebyshev's inequality, we have:
      
      $$\begin{aligned}
        \operatorname{Pr}(|R'-R-\mathbb{E}(R'-R)|\ge a)&\le \frac{\operatorname{Var}(R'-R)}{a^2}\\
        \operatorname{Pr}(|R'-R|\ge a)&\le \frac{3\zeta n\delta^2}{a^2(\zeta n-4)(\zeta n-1)}
      \end{aligned}$$
      \end{proof}

    \section{Human Evaluation Questionnaire}\label{sec:appendix-questionnaire}

    This is a sample of human evaluation questionnaire. It contains a sample text and the human helpers are asked to answer two questions by scoring between $0\sim 5$.
    
    \noindent
    \fbox{\parbox{\linewidth}{Read the sentences below and use the sliders below indicate the overall performance and the repetition score. (1 = very poor, 5 = very good)
    \\
    \\
    Sentence : The Beatles were more absurd than their previous stories . They were often compared to the audience ; they were sometimes portrayed as paraneatingly aggressive and intriguing characters . The Beatles were often seen as a performer of the Beatles , particularly as the Beatles ' " western " side , while by the end of 1969 they were frequently seen as " a black and white band . " Lennon recalled that McCartney was " always a little bit bamboo " . John Lennon , McCartney , McCartney and his friend George Harrison later recalled that " they had it for a few years and a lot of people . "
    \\
    \\
    1) What's the overall quality of the sentence?\\
    (1 = The sentence is incoherent, has many grammatical errors, and difficult to understand,
    5 = The sentence is coherent, grammatically correct, and easy to understand)
    \\
    \\
    
    2) Does the text repeat itself?\\
    (1 = It's always repeating some words and content,
    5 = It never repeats anything.)
    }}

    \section{Hyper-Parameters Tunning}\label{sec:appendix-hyperpars}
    \textbf{NMT Task.} For the Greedy sampling method and the Stochastic sampling method, there is no hyper-parameter to tune. For other models, we tune the temperature parameter with grid search on the valid set. We choose the model with the best BLEU score from models that has lower repetition scores than the Greedy method. For the Temperature sampling method, we set temperature $t=0.15$ by searching from [0.1, 0.15, 0.2, 0.25, 0.3, 0.35, 0.4, 0.45, 0.5, 0.55, 0.6, 0.65, 0.7, 0.75, 0.8, 0.85, 0.9, 0.95, 1.0]. For the Topk/Nucleus sampling method, we report Topk sampling with $k=10/40$ and Nucleus sampling with $p=0.9/0.95$ following the setting of \citet{holtzman2020curious}. The temperature for them are chosen similar to the Temperature sampling method and we set the temperature $t=0.15$ for Topk ($k$=10), $t=0.2$ for Topk ($k$=40), $t=0.2$ for Nucleus ($p$=0.9), $t=0.15$ for Nucleus ($p$=0.95). For the LP method, we choose $\beta=6$ from [3,4,5,6,7,8]. For our proposed RE method, we do not need to tune the temperature parameter since it already alleviates the repetition problem a lot. We follow the default setting in \cite{ott2019fairseq} and run the program until the BLEU score on the valid set does not change too much. We use the Adam optimization method with the learning rate set to 5e-5. We use label smoothed cross-entropy with label smooth parameter set to 0.1. We set the dropout to 0.3.
    
    \textbf{LM Task.} For the Greedy method and the Stochastic sampling method, there is no hyper-parameter to tune. For other models, we tune the temperature parameter with grid search. We choose the model with the ppl-c score most close to 1.0 since ppl-c for human written language is 1.0. For the Temperatue sampling method, we set temperature $t=0.75$ by searching from [0.3, 0.35, 0.4, 0.45, 0.5, 0.55, 0.6, 0.65, 0.7, 0.75, 0.8, 0.85, 0.9, 0.95, 1.0]. We use similar method to set $t=0.9$ when $k=40$ and $t=1.0$ when $k=10$ for the Topk sampling method. We set $t=0.85$ when $p=0.9$ and $p=0.95$ for the Nucleus method. We choose $\beta=7$ from [3,4,5,6,7,8]. We set $t=0.75$ when $\gamma=0.1$ and $t=0.7$ when $\gamma=0.08$ for the RE model. We follow the default setting in \cite{ott2019fairseq} and run the program until the perplexity score does not change too much. We use the Adam optimization method with learning rate set to 5e-5 and we set the dropout to 0.1.
\end{document}